\newtheorem{lemma}{Lemma}
\newtheorem{theorem}{Theorem}
\newtheorem{corollary}{Corollary}
\newenvironment{customthm}[1]
  {\innercustomthm}
  {\endinnercustomthm}
\newcommand{\todo}[1]{\textcolor{red}{TODO: {#1}}}
\newcommand{\sat}{s}
\newcommand{\npe}{p}
\newcommand{\choc}{\textit{Choc}}
\newcommand{\kale}{\textit{Kale}}
\newcommand{\calA}{\mathcal{A}}
\newcommand{\calB}{\mathcal{B}}
\newcommand{\calH}{\mathcal{H}}
\newcommand{\calM}{\mathcal{M}}
\newcommand{\calS}{\mathcal{S}}
\newcommand{\calZ}{\mathcal{Z}}
\newcommand{\frakb}{\mathfrak{b}}
\newcommand{\veps}{\varepsilon}
\newcommand{\scite}{\shortcite}
\DeclareMathOperator*{\E}{\mathbb{E}}
\DeclareMathOperator*{\argmin}{arg\,min}
\DeclareMathOperator*{\argmax}{arg\,max}
\newcommand{\ignore}[1]{}
\title{Advantage Amplification in Slowly Evolving Latent-State Environments}
\author{
Martin Mladenov$^1$\and
Ofer Meshi$^1$\and
Jayden Ooi$^1$\and
Dale Schuurmans$^{1,2}$\and
Craig Boutilier$^1$
\affiliations
$^1$Google AI, Mountain View, CA, USA\\
$^2$Department of Computer Science, University of Alberta, Edmonton, AB, Canada
\emails
\{mmldanenov,meshi,jayden,schuurmans,cboutilier\}@google.com
}
\begin{document}

\maketitle

\begin{abstract}
\begin{small}
Latent-state environments with long horizons, such as those faced by recommender systems, pose
significant challenges for reinforcement learning (RL). In this work, we identify
and analyze several key hurdles for RL in such environments, including belief state
error and small action advantage. We develop a general principle called \emph{advantage
amplification} that can overcome these hurdles through the use of temporal
abstraction. We propose several aggregation methods and prove they
induce amplification in certain settings. We also bound the loss
in optimality 
incurred by our methods in
environments where latent state evolves slowly and demonstrate their performance
empirically in a stylized user-modeling task.
\end{small}
\end{abstract}

\section{Introduction}
\label{sec:intro}

\emph{Long-term value (LTV)} estimation and optimization
is of increasing
importance in the design of \emph{recommender systems (RSs)}, and other
user-facing 
systems.
Often the problem is framed as a \emph{Markov decision process (MDP)} and solved using
MDP algorithms or \emph{reinforcement learning (RL)}
\cite{shani:jmlr05,taghipour:recsys07,choi2018reinforcement,zhao2017deep,mirrokni:wine12,logisticMDPs:ijcai17}.
Typically, actions are the set of recommendable
items\footnote{Item \emph{slates}
are often recommended, but we ignore this here.}; states reflect information
about the user (e.g., static attributes, past interactions, context/query); and rewards measure some form of user engagement (e.g., clicks, views, time spent,
purchase). Such \emph{event-level models}
have seen some success, but current
state-of-the-art is limited to very short horizons.

When dealing with long-term user behavior, it is vital
to consider the impact of recommendations on user \emph{latent state}
(e.g., satisfaction, latent interests, or item awareness)
which often
governs both immediate and long-term behavior. Indeed, the main promise of using
RL/MDP models for RSs is to (a) identify latent state (e.g., uncover topic interests via exploration) and (b) influence the latent state (e.g., create new interests or improve awareness and satisfaction). That said, evidence is emerging that at least some aspects of user latent state \emph{evolve very slowly}. For example, Hohnhold et al.~\scite{hohnhold:kdd15} show that varying ad quality and ad load induces slow, but inexorable (positive or negative) changes in user click propensity over a period of months, while Wilhelm et al.~\scite{wilhelm:cikm18} show that explicitly diversifying recommendations in YouTube induces similarly slow, persistent changes in
user engagement (see such slow ``user learning'' curves in Fig.~\ref{fig:slowcurves}).


\begin{figure*}[t]
\centering
\begin{subfigure}{.5\textwidth}
  \centering
  \includegraphics[width=.7\textwidth]{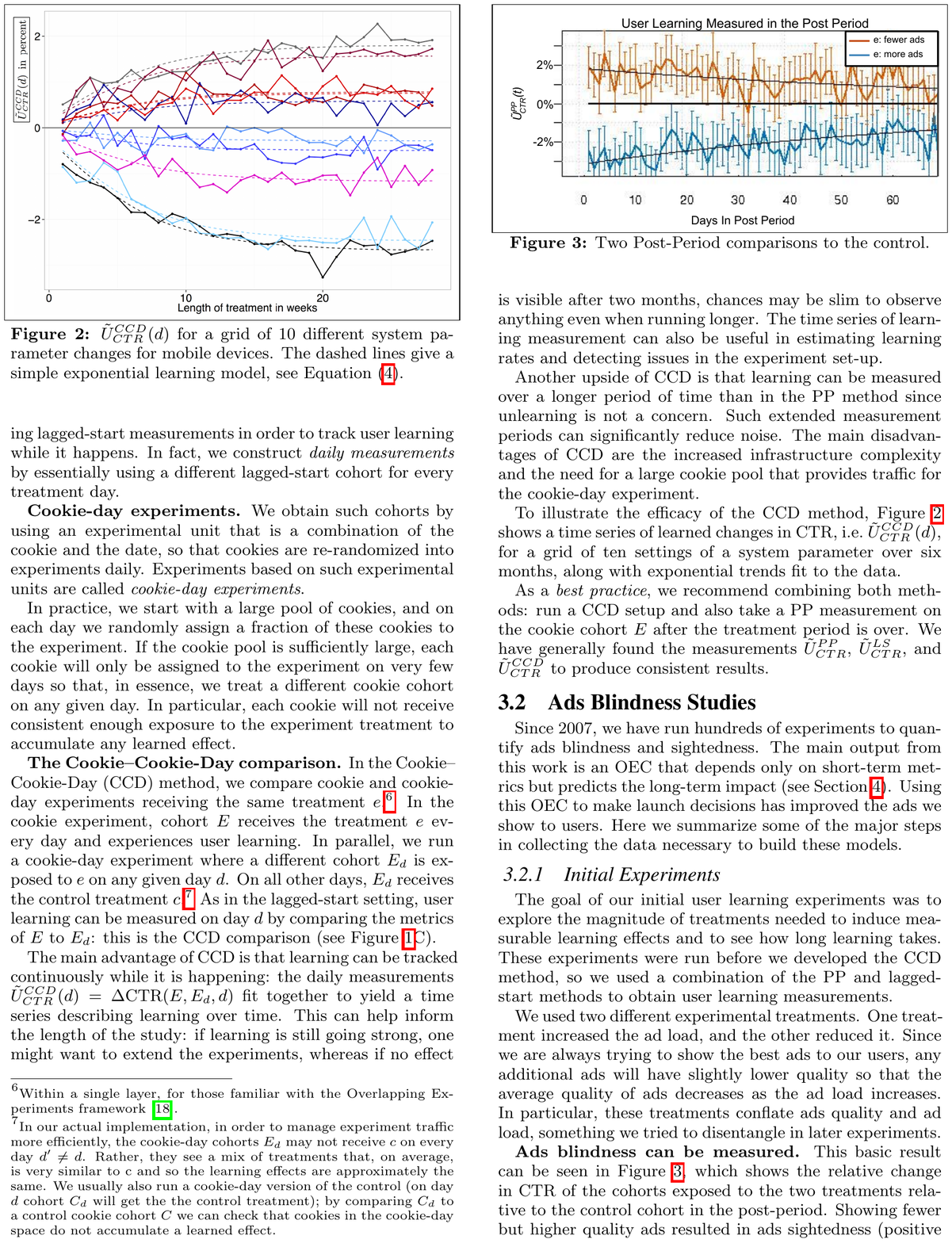}
\end{subfigure}%
\begin{subfigure}{.5\textwidth}
  \centering
  \includegraphics[width=.6\textwidth]{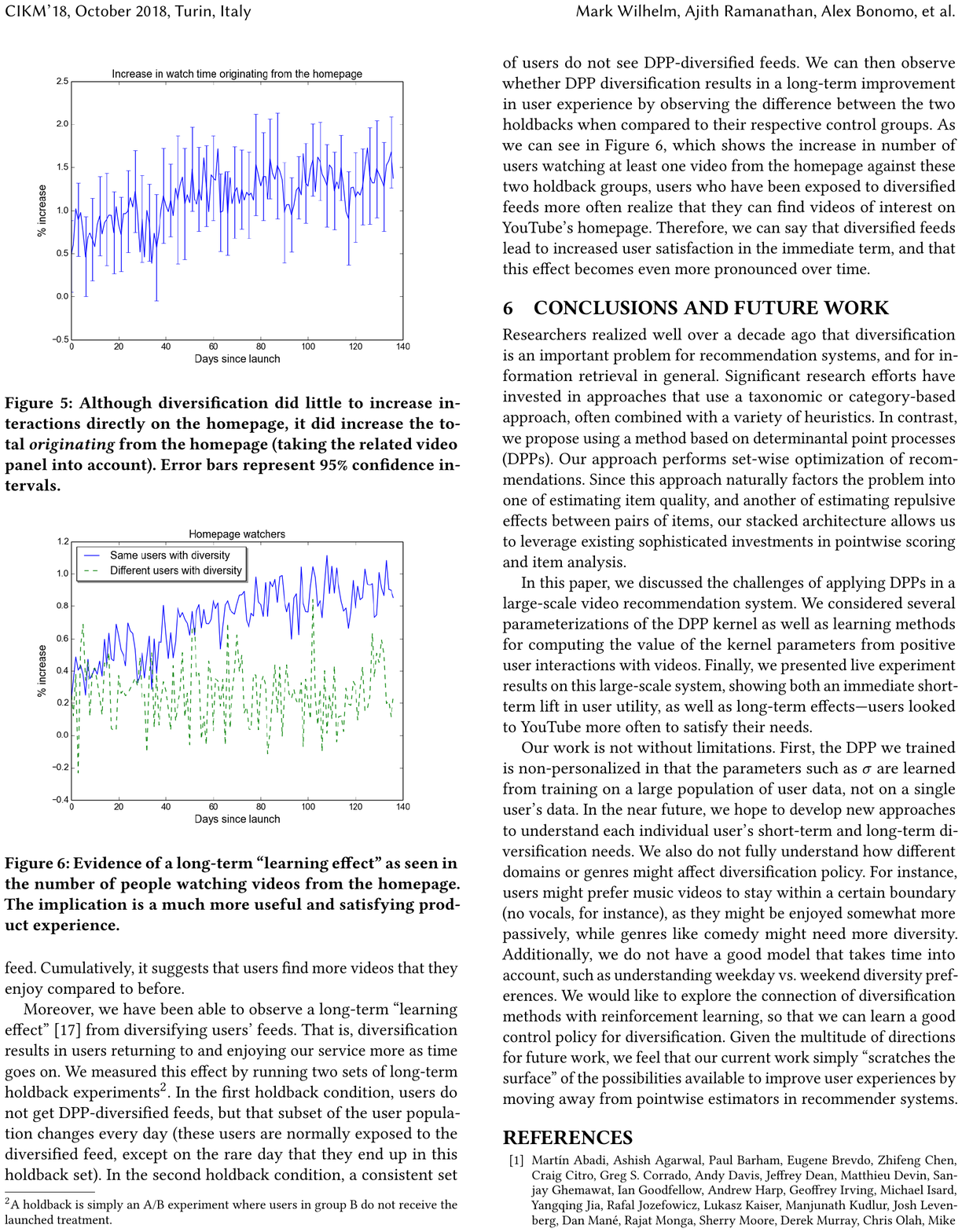}
\end{subfigure}
\vspace*{-2mm}
{\small \caption{Gradual user response: (a) ad load/quality \protect\cite{hohnhold:kdd15}; (b) YouTube recommendation diversity \protect\cite{wilhelm:cikm18}. \label{fig:slowcurves}}}
\vspace*{-3mm}
\end{figure*}


Event-level RL in such settings is challenging for several reasons. First, the effective horizon over which an
RS policy influences the latent state can extend up to 
$O(10^4\textup{--}10^5)$ state transitions. Indeed, the cumulative effect of recommendations is
vital for LTV optimization, but the long-term impact of any \emph{single}
recommendation is often dwarfed by immediate reward differences.
Second, the MDP is partially observable, requiring some form of belief state estimation. Third, the impact of latent state on immediate
observable behavior is often small 
and very noisy---the problems have a low 
\emph{signal-to-noise ratio (SNR)}.
We detail below how these factors interact.

Given the importance of LTV optimization in RSs, we propose a new technique called
\emph{advantage amplification} to overcome these challenges. Intuitively,
amplification seeks to overcome the error induced by state estimation
by introducing
(explicit or implicit) \emph{temporal abstraction} across policy space.
We require that policies take a series of actions, thus allowing more accurate value estimation by mitigating the cumulative
effects of state-estimation error. We first consider \emph{temporal aggregation}, where an action is held fixed for a short horizon.
We show that this can lead to significant
amplification of the advantage differences between abstract actions 
(relative to event-level actions).
This is a form of MDP/RL temporal abstraction as used in
\emph{hierarchical RL} \cite{Sutton-etal:AIJ99,barto2003recent} and
can be viewed as options or macros designed for the purpose
of allowing distinction of good and bad behaviors in latent-state domains
with low SNR (rather than, say, for subgoal achievement).
We generalize this by analyzing policies with (artificial) action \emph{switching costs}, which induces similar amplification with more flexibility.

Limiting policies to temporally abstract actions induces potential
sub-optimality \cite{parr:uai98,HMKDB:uai98}. However, since the underlying latent state often evolves slowly
w.r.t.\ the event horizon in RS settings, we identify a ``smoothness'' property 
that is used to bound the induced error of advantage amplification.

Our contributions are as follows. We introduce a stylized model of slow user learning in RSs
(Sec.~\ref{sec:chockale}) and formalize this as a POMDP (Sec.~\ref{sec:background}), defining
several novel concepts, and show how low SNR interacts
poorly with belief-state approximation (Sec.~\ref{sec:problem}). We develop advantage
amplification as a principle 
and prove that \emph{action aggregation}
(Sec.~\ref{sec:aggregation})
and \emph{switching cost regularization}
(Sec.~\ref{sec:switching_cost}) 
provide strong
amplification guarantees with minimal policy loss under suitable conditions. Experiments with
stylized models show the effectiveness of these techniques.\footnote{Proofs, auxiliary lemmas and additional experiments are available in an extended version of the paper.}

\section{User Satisfaction: An Illustrative Example}
\label{sec:chockale}

Before formalizing our problem, we describe
a stylized model reflecting the dynamics of \emph{user satisfaction} as a user interacts with
an RS.
The model is intentionally stylized to help
illustrate key concepts underlying the formal model and analysis developed in the sequel
(hence ignores much of the true complexity of user satisfaction).
Though focused on user-RS engagement, the principles apply more broadly
to any latent-state system with low SNR and slowly evolving latent state.

Our model captures the relationship between a user and an RS
over an extended period (e.g., a content recommender of news, video, or music) through
\emph{overall user satisfaction}, which is not known to the RS.
We hypothesize that satisfaction is one (of several) key latent factors that impacts user engagement; and since new treatments often induce slow-moving or delayed effects on user
behavior, we assume this latent variable evolves slowly as a function of the quality of
the content consumed 
\cite{hohnhold:kdd15} (and see Fig.~\ref{fig:slowcurves} (left)).
Finally, the model captures the tension between (often low-quality) content that encourages short-term
engagement (e.g., manipulative, provocative or distracting content)
at the expense
of long-term engagement; and high-quality content that promotes long-term usage but can sacrifice near-term
engagement.

Our model includes two classes of recommendable items. Some items
induce high immediate engagement, 
but degrade user engagement
over the long run. We dub these ``Chocolate'' ($\choc$)---immediately appealing but not very ``nutritious.''
Other items---dubbed ``Kale,'' less attractive, but more ``nutritious''---induce lower immediate engagement but tend to improve long-term engagement.\footnote{Our model allows
a real-valued continuum of items (e.g., degree of Choc between $[0,1]$ as in our
experiments)
like measures of ad quality. We use the binary form to streamline our
initial exposition.}
We call this the \emph{Choc-Kale model} (CK).
A stationary, stochastic policy can be represented by a single scalar $0\le\pi\le 1$ representing the probability of taking action $\choc$. We sometimes refer to $\choc$  as a ``negative'' and 
$\kale$ as a ``positive'' recommendation.

We use a single latent variable $\sat\in [0,1]$ to capture a user’s
overall satisfaction with the RS. Satisfaction is driven by \emph{net positive exposure} $\npe$, 
which measures total positive-less-negative recommendations, with a discount $0\le\beta<1$ applied to ensure that $\npe$ is bounded:
{\small $\npe \in \left[\frac{-1}{1-\beta},\frac{1}{1-\beta}\right]$}.
We view $\npe$ as a user’s learned perception of the RS and $\sat$ as how this
influences gradual changes in engagement.

A user response to a recommendation $a$ is her \emph{degree of engagement} $g(\sat,a)$, and
depends stochastically on both the quality of the recommendation,
and her latent state $\sat$. 
$g$ is a random function, e.g., responses might be
normally distributed:
{\small $g(\sat,a) \sim N(\sat\cdot\mu_a, \sigma_a^2) ~~\text{for } a\in\{\choc,\kale\}$}.
We use $g(\sat, a)$ also to denote expected engagement.
We require that $\choc$ results in greater immediate (expected) engagement than $\kale$, $g(\sat, \kale) < g(\sat, \choc)$, for any fixed $\sat$. 


\begin{figure*}[ht]
    \centering
    \begin{subfigure}[b]{0.24\textwidth}
        \includegraphics[width=\textwidth]{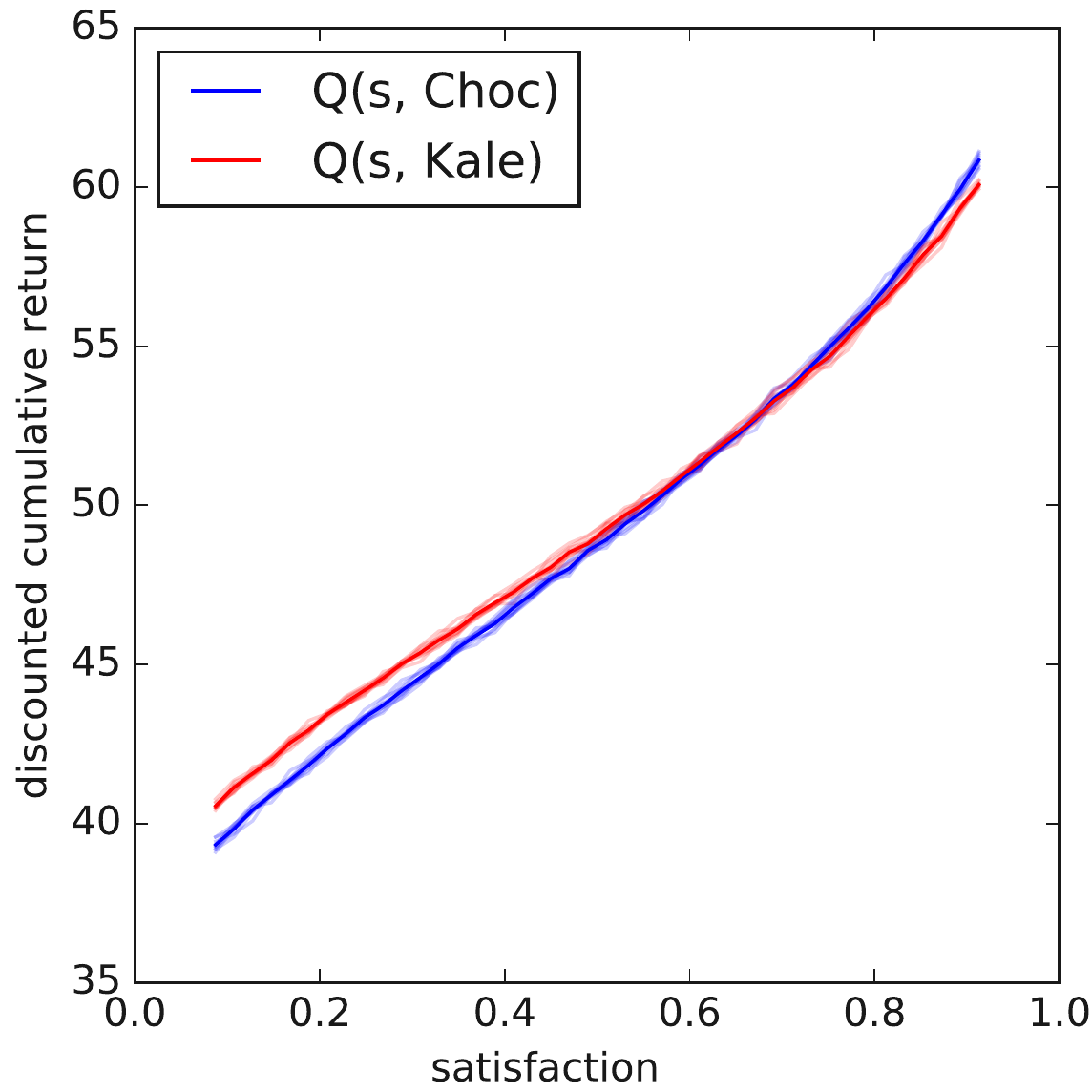}
        \caption{event-level, fully observable}
        \label{fig:qvalues:no_agg_low_noise}
    \end{subfigure}
    ~ 
    \begin{subfigure}[b]{0.24\textwidth}
        \includegraphics[width=0.985\textwidth]{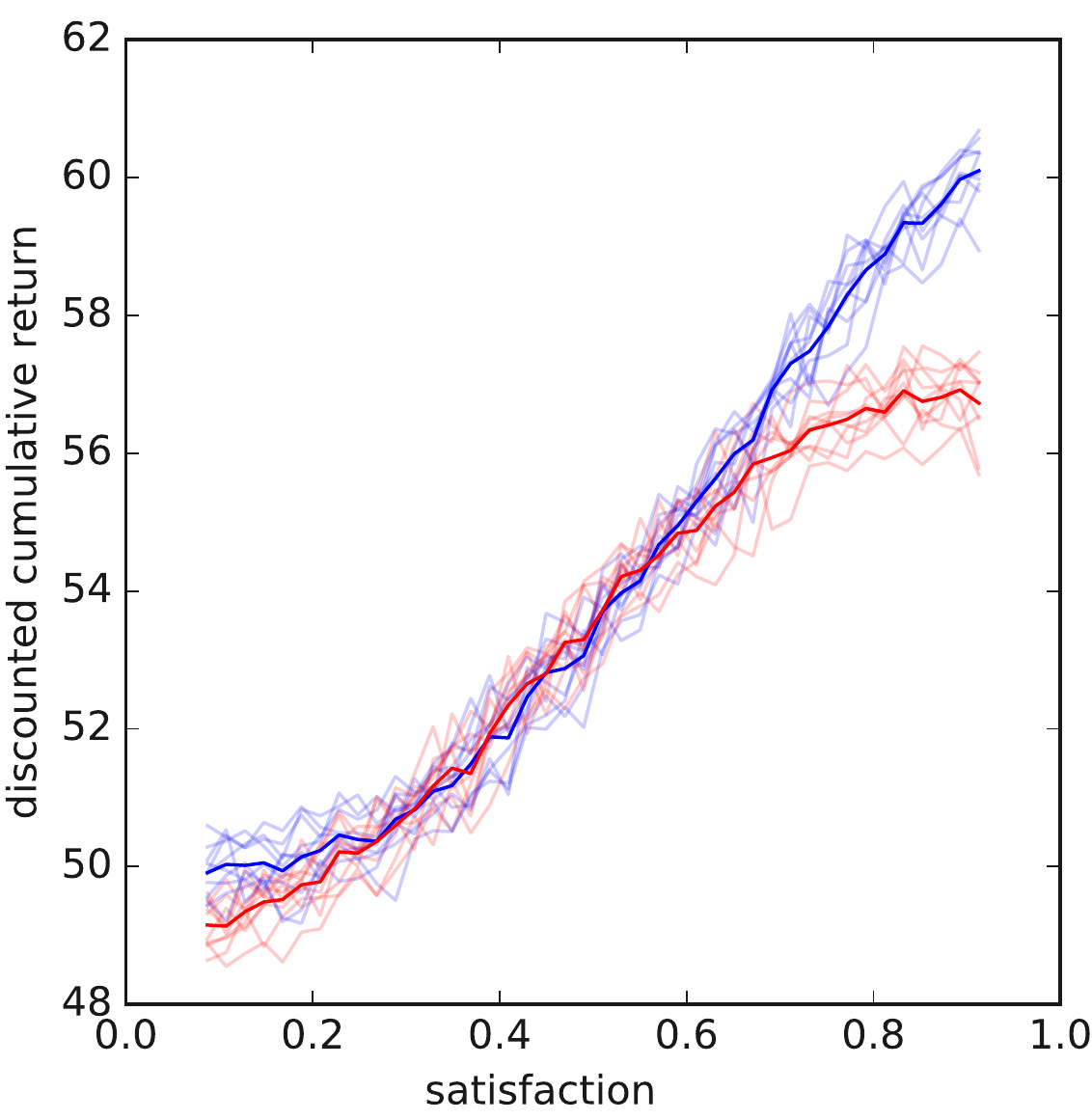}
        \caption{event-level, noisy}
        \label{fig:qvalues:no_agg_w_noise}
    \end{subfigure}
        \begin{subfigure}[b]{0.24\textwidth}
        \includegraphics[width=\textwidth]{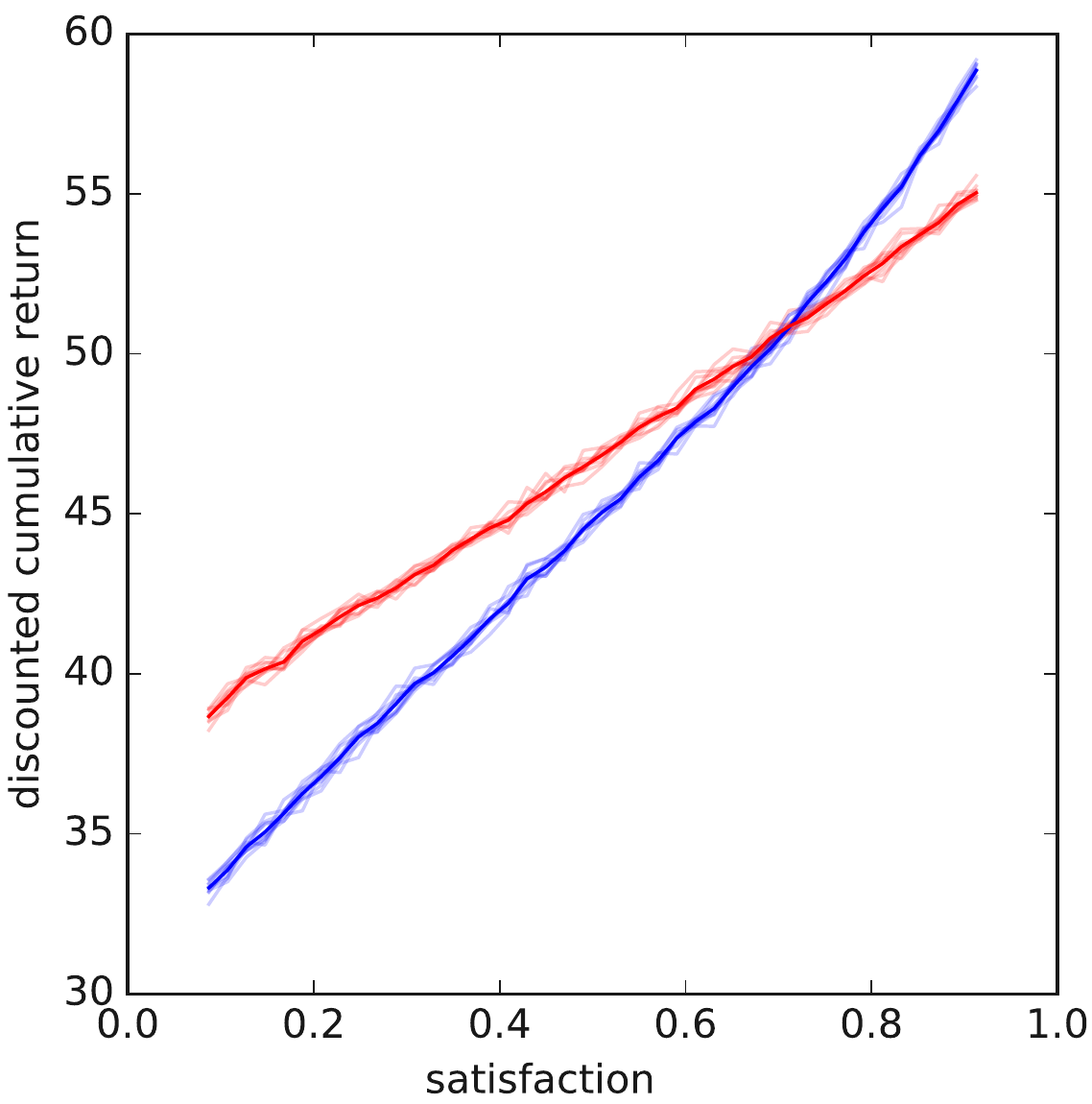}
        \caption{aggregated, fully observable}
        \label{fig:qvalues:agg_low_noise}
    \end{subfigure}
    ~ 
    \begin{subfigure}[b]{0.24\textwidth}
        \includegraphics[width=\textwidth]{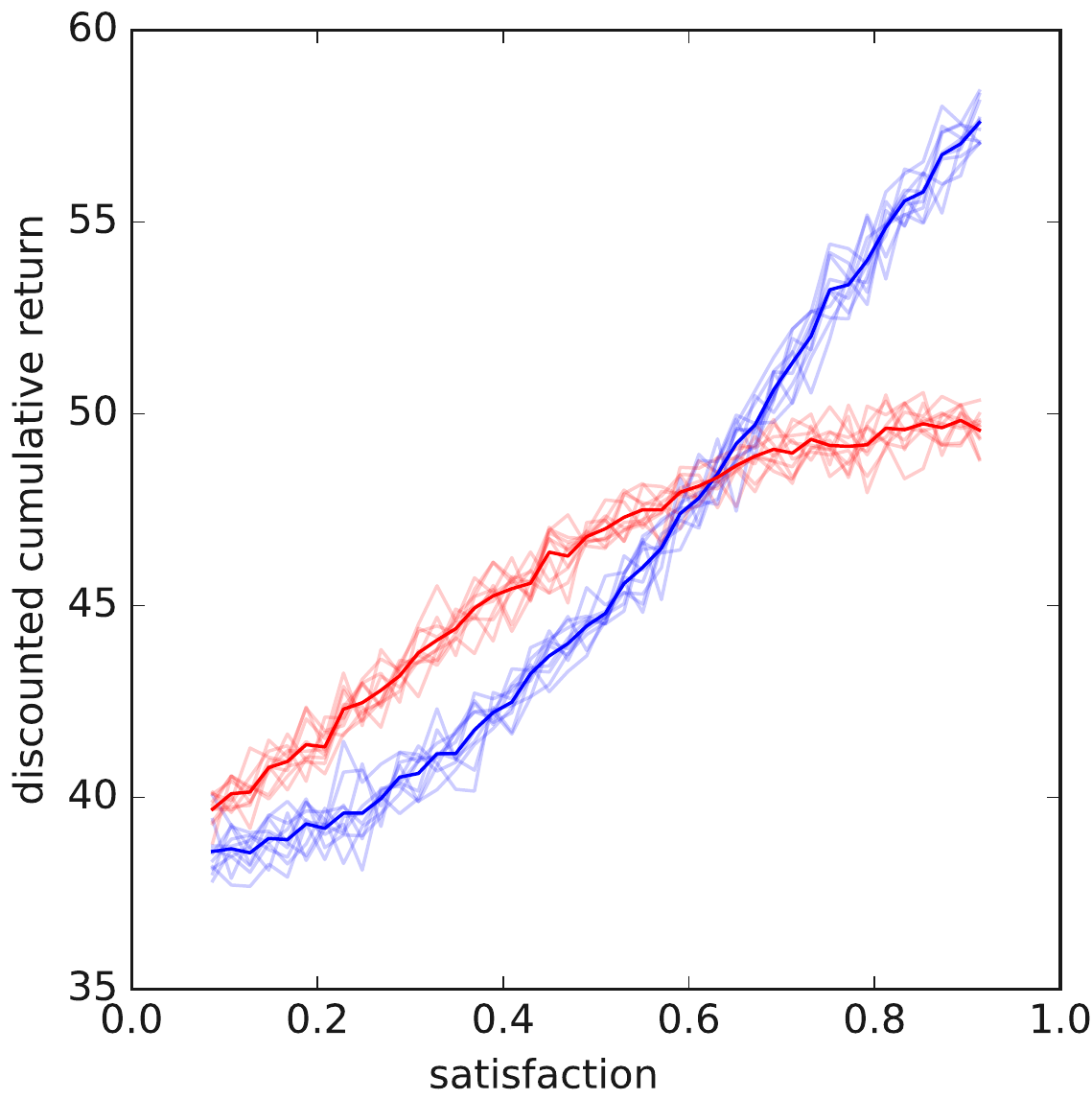}
        \caption{aggregated, noisy}
        \label{fig:qvalues:agg_w_noise}
    \end{subfigure}
      \vspace*{-3mm}
      \caption{Q-values per satisfaction level in the Choc-Kale model.}
      \label{fig:qvalues}
      \vspace*{-3mm}
\end{figure*}

The dynamics of $\npe$ is straightforward. A $\kale$ exposure increases $\npe$ by 1
and $\choc$ decreases it by 1 (with discounting):
$\npe_{t+1} \leftarrow \beta\npe_t + 1$ with $\kale$ (and $-1$ with $\choc$).
Satisfaction 
$\sat$ is a user-learned function of $\npe$ and follows a sigmoidal learning curve:
$\sat(\npe) = 1/(1+e^{-\tau\npe})$,
where $\tau$ is a temperature/learning rate parameter.
Other learning curves are possible, but the sigmoidal model captures both positive and negative exponential learning 
as hypothesized in psychological-learning literature \cite{thurstone:Learning1919,jaber:HumanLearning2006}
and as observed in the empirical curves in Fig.~\ref{fig:slowcurves}.\footnote{Such learning
curves are often reflective of aggregate behavior, obscuring individual differences that are much less
``smooth.'' However, unless cues are available that allow us to model such individual differences, the
aggregate model serves a valuable role even when optimizing for individual users.}

We compute the Q-values of $\choc$ and $\kale$ for each satisfaction level $s$ and plot them in Fig.~\ref{fig:qvalues:no_agg_low_noise}. We observe that when satisfaction is low, $\kale$ is a better
recommendation, and above some level $\choc$ becomes preferable, as expected. We also see that for any $s$ the difference in Q-values is rather small.
With additional noise, the Q-values become practically indistinguishable for a large range of satisfaction levels (Fig.~\ref{fig:qvalues:no_agg_w_noise}), which illustrates the hardness of RL in this setting.




\section{Problem Statement}
\label{sec:background}

We outline a basic latent-state control problem as a partially observable MDP (POMDP) that encompasses the notions above. We highlight several 
properties that play a key role in the analysis of latent-state RL we develop in
the next section.

We consider environments that can be modeled as a POMDP
$\calM = \langle \calS, \calA, T, \calZ, O, R, \frakb_0, \gamma \rangle$ \cite{smallwood-sondik:or73}. States $\calS$ reflect user latent state
and other observable aspects of the domain: in the CK model, this is simply a user's
current satisfaction $\sat$. Actions $\calA$
are recommendable items:
in CK, we distinguish only $\choc$ from $\kale$.
The transition kernel $T(s,a,s')$
in the CK model is $T(\sat',a,\sat) = 1$ 
if {\small $s' = (1+\exp(\beta \log\left(1-1/s\right)-\beta\tau a))^{-1}$}, where
$a$ is 1 (resp., -1) for action $\kale$ (resp., $\choc$).\footnote{This is easily
randomized if desired.}
Observations $\calZ$ reflect observable user behavior and
$O(s,a,z)$ the probability
of $z\in\calZ$ when $a$ is taken at state $s$.
In CK, $\calZ$ is the observed engagement with a recommendation
while $O$ reflects the
random realization of $g(\sat,a)$. The immediate reward $R(s,a)$ is (expected) user
engagement (we let $r_{\max} = \max_{s,a} R(s,a)$),
$\frakb_0$ the initial state distribution, and $\gamma\in[0,1)$ the discount factor.

In this POMDP, an RS does not have access to the true state $s$, but must 
generate policies that depend only on the sequence of past
action-observation pairs---let
${\cal H}^\ast$ be the set of all finite such sequences
$(a_t, z_t)_{t\in \mathbb{N}}$. Any such \emph{history} can be summarized, via optimal Bayes
filtering, as a distribution or \emph{belief state} $\mathfrak{b}\in\Delta(\calS)$.
More generally, this ``belief state'' can be 
\emph{any summarization} of ${\calH}^\ast$ used to make decisions.
It may be, say, a
collection of sufficient statistics, or a deep recurrent embedding of history.
Let $\calB$ denote the set of (realizable) belief states.
We also require a mapping
$U:\calB \times \calA \times \calZ \rightarrow \calB$ that describes the update $U(\mathfrak{b}, a,z)$ 
of any $\mathfrak{b}\in\calB$ given $a\in\calA, z\in\calZ$. The pair $(\calB,U)$ defines our \emph{representation}.

A \emph{(stochastic) policy} is a mapping $\pi:\calB\rightarrow\Delta(\calA)$ that selects an
action distribution $\pi(\mathfrak{b})$ for execution given belief $\mathfrak{b}$; we write
$\pi(a|\mathfrak{b})$ to indicate the probability of action $a$. Deterministic policies are
defined in the usual way. The \emph{value} of a policy $\pi$ is given by the standard
recurrence:\footnote{Here $R(\mathfrak{b}, a)$ and $\Pr(z|\mathfrak{b}, a)$ are given by expectations of $R$ and $O$, respectively,
w.r.t.\ $s\sim \mathfrak{b}$ if $\mathfrak{b}\in\Delta(\calS)$. The interpretation for other
representations is discussed below.}
\begin{small}
\begin{equation}
\label{eq:vf}
V^\pi(\mathfrak{b})\! =\! \E_{a\sim\pi(\mathfrak{b})} \left[ R(\mathfrak{b}, a)\! +\! \gamma\sum_{z\in\calZ}\! \Pr(z|\mathfrak{b},a) V^\pi(U(\mathfrak{b}, a,z)) \right]
\end{equation}
\end{small}
We define $Q^\pi(\mathfrak{b}, a)$ by fixing $a$ in Eq.~\ref{eq:vf} (rather than taking an
expectation). An \emph{optimal policy} $\pi^\ast = \sup V^\pi$ over $\calB$
has value (resp., Q) function $V^\ast$ (resp., $Q^\ast$). Optimal
policies and values can be computed using 
dynamic programming
or learned using (partially observable) RL methods.
When we learn a Q-function $Q$, whether exactly or approximately, the
\emph{policy induced by $Q$} is the greedy policy $\pi(\frakb) = \arg\max_a Q(\frakb,a)$ and its \emph{induced value function} is
$V(\frakb) = \max_a Q(\frakb,a) = Q(\frakb,a^\ast(\frakb))$.
The \emph{advantage function}
$A(a,\frakb) = V^\ast(\mathfrak{b}) - Q^\ast(\mathfrak{b}, a)$ reflects the difference in
the expected value of doing $a$ at $\mathfrak{b}$ (and then acting optimally) vs.\ 
acting optimally at $\mathfrak{b}$ \cite{baird:phd1999}.
If $a_2$ is the second-best action at $\frakb$, the \emph{advantage} of that belief state is
$A(\mathfrak{b}) =  V^{\ast}(\mathfrak{b}) - Q^{\ast}(\mathfrak{b}, a_2)$.

Eq.~\ref{eq:vf} assumes optimal Bayesian filtering, i.e., the
representation $(\calB,U)$ must be such that the (implicit) expectations over $R$ and $O$
are exact for any history that maps to 
$\mathfrak{b}$.
Unfortunately, exact recursive state estimation is intractable,
except for special cases (e.g., linear-Gaussian control). As a consequence,
\emph{approximation schemes} are used in practice (e.g., variational projections \cite{boyen:uai98};
fixed-length histories, incl.\ treating observations
as state \cite{singh:icml94};
learned PSRs \cite{littman_psrs:nips02}; 
recursive policy/Q-function representations \cite{recurrent_psr}).
Approximate
histories render the
process non-Markovian;
as such, a counterfactually estimated Q-value of a policy
(e.g., using offline data) differs from its 
\emph{true} value due to modified latent-state dynamics
(not reflected in the data). In this case, any RL method that treats
$\mathfrak{b}$ as (Markovian) state induces a suboptimal policy.
We can bound the induced suboptimality using
\emph{$\veps$-sufficient statistics} \cite{epsilon_sufficient}.
A function $\phi: {\cal H}^\ast \rightarrow \calB$ 
is an $\veps$-sufficient statistic if, for all 
$H_t \in {\cal H}^\ast$,
\[ \left|p(s_{t+1}|H_t) - p(s_{t+1}|\phi(H_t)) \right|_\mathrm{TV} < \veps~, \]
where $|\cdot|_{\mathrm{TV}}$ is the total variation distance.
If $\phi$ is $\veps$-sufficient, then any MDP/RL
algorithm that constructs an ``optimal''
value function $\hat{V}$ over $\calB$ incurs a bounded loss w.r.t.\ $V^\ast$ \cite{epsilon_sufficient}:
\begin{small}
\begin{equation}
\label{eq:non_markov_loss}
\left|V^{\ast}(\phi(H)) - {\hat V}(\phi(H))\right| \leq \frac{2\veps r_\text{max}}{(1-\gamma)^3}.
\end{equation}
\end{small}


The errors in Q-value estimation induced by limitations of $\calB$ are irresolvable (i.e., they are a form of model \emph{bias}),
in contrast to error induced by limited data. Moreover, any RL method relying only on offline data is subject to the above bound, regardless of whether the Q-values are estimated directly or not.
The impact of this error on model performance can be
related to certain properties of the underlying domain as we outline below. A useful
quantity for this purpose is the \emph{signal-to-noise ratio (SNR)} of a POMDP, defined as:
\[\mathfrak{S}\triangleq \frac{\sup_{\mathfrak{b}} A(\mathfrak{b})}{\sup_{\mathfrak{b}: A(\mathfrak{b}) \leq 2\veps r_\mathrm{max}/(1-\gamma)^2} A(\mathfrak{b})} - 1,\]
(the denominator is treated as $0$ if no $\mathfrak{b}$ meets the condition).

As discussed above, many aspects of user latent state, such as satisfaction, evolve slowly.
We say a POMDP is 
\emph{$L$-smooth} 
if, for all $\mathfrak{b}, \mathfrak{b}^\prime \in \calB$, and $a\in A$
s.t.\ $T(\mathfrak{b}',a,\mathfrak{b}) > 0$, we have  
\begin{small}
\[|Q^{\ast}(\mathfrak{b}, a)-Q^{\ast}(\mathfrak{b}',a)| \leq L.\] 
\end{small}%
Smoothness ensures that for any state reachable under an action $a$, the optimal
Q-value of $a$ does not change much.



\section{Advantage Amplification}
\label{sec:advantage}

We now detail how low SNR causes difficulty for RL in POMDPs,
especially
with long horizons (Sec.~\ref{sec:problem}). We
introduce the principle of \emph{advantage amplification} to address it (Sec.~\ref{sec:aggregation})
and describe two realizations, temporal aggregation (Sec.~\ref{sec:aggregation}) and switching cost (Sec.~\ref{sec:switching_cost}).

\subsection{The Impact of Low SNR on RL}
\label{sec:problem}
The bound Eq.~\eqref{eq:non_markov_loss} can help
assess the impact of low SNR on RL.
Assume that
policies, values or Q-functions are learned using an approximate belief
representation $(\calB,U)$
that is
$\veps$-sufficient. 
We first show that the error induced by $(\calB,U)$
is tightly coupled to optimal action advantages in the domain. 

Consider an RL agent that learns Q-values using a behavior (data-generating) policy $\rho$. The
non-Markovian nature of $(\calB,U)$ means that: (a) the resulting estimated-optimal policy $\pi$ will have \emph{estimated} values $\hat{Q}^{\pi}$ that differ from its \emph{true} values $Q^{\pi}$; and (b) the estimates $\hat{Q}^{\pi}$ (hence, the choice of $\pi$ itself) will depend on $\rho$.
We bound the loss of $\pi$ w.r.t.\ the
optimal $\pi^{\ast}$ (with exact filtering) as follows. First,
for any (belief) state-action pair $(\mathfrak{b}, a)$, suppose the maximum difference between its
inferred and optimal Q-values is bounded for
any $\rho$: {\small $|Q^{\ast}(\mathfrak{b}, a)-Q^{\pi}(\mathfrak{b}, a)| \leq \delta$}.
By Eq.~\eqref{eq:non_markov_loss} we
set 
\begin{small}
\begin{equation}
\delta = \frac{\veps Q_{\max}}{1-\gamma} \leq \frac{\veps r_{\max}}{(1-\gamma)^2}. \label{eq:setdelta}
\end{equation}
\end{small}


If $\mathfrak{b}$ has small advantage $A(\mathfrak{b}) \leq 2\delta$,
under behavior policy $\rho$, the estimate $\hat{Q}(\mathfrak{b}, a_2)$ (the second-best action)
can exceed that of $\hat{Q}(\mathfrak{b}, a^\ast(\frakb))$;
hence $\pi$ executes $a_2$. If $\pi$ visits $\mathfrak{b}$ (or states
with similarly small advantages) at a constant rate, the loss w.r.t.\ $\pi^\ast$ compounds, inducing $O(\frac{2\delta}{1-\gamma})$ error.

The tightness of the second part of the argument depends on the structure of
the advantage function $A(\mathfrak{b})$. To illustrate, consider two extreme regimes. First, if $A(\mathfrak{b})\geq 2\delta$ 
at all $\mathfrak{b}\in\calB$,
i.e., if SNR $\mathfrak{S} = \infty$,
state
estimation error has no impact on the recovered policy and incurs no loss.
In the second regime, if all $A(\mathfrak{b})$ are less than (but on the order of) $2\delta$,
i.e., if $\mathfrak{S} = 0$, 
then the inequality is tight
provided $\rho$ saturates the state-action error bound.
We will see below that low-SNR environments with long horizons (e.g., practical RSs, the stylized CK model) often 
have such small (but non-trivial) advantages across a wide
range of state space.

The latter situation in illustrated on Fig.~\ref{fig:qvalues}. In Fig.~\ref{fig:qvalues:no_agg_low_noise}, the Q-values of the CK model are plotted against the level of satisfaction (as if satisfaction were fully observable). The small advantages are notable. Fig.~\ref{fig:qvalues:no_agg_w_noise} shows the Q-value estimates for $10$ independent tabular Q-learning reruns (the thin lines show the individual runs, the thick lines show the average) where noise is added to $s$. The corrupted $Q$-values at all but the highest satisfaction levels are essentially indistinguishable, leading to extremely poor policies.

\subsection{Temporal Abstraction: Action Aggregation}
\label{sec:aggregation}

There is a third regime in which state error is relatively benign. 
Suppose the advantage at each state $\mathfrak{b}$ is either small, 
$A(\mathfrak{b})\leq\sigma$,
or large,
$A(\mathfrak{b})>\Sigma$
for some constants $\sigma \ll 2\delta \leq \Sigma$.
The induced policy incurs a loss of $\sigma$ at small-advantage states, and no loss on states with large advantages. 
This leads to a compounded 
loss of at most $\frac{\sigma}{1-\gamma}$, 
which may be much smaller than the $\frac{\veps r_\mathrm{max}}{(1-\gamma)^2}$ error (Eq.~\ref{eq:setdelta})
depending on $\sigma$.

If the environment is smooth, \emph{action aggregation} can be used to restructure a problem falling in the second regime
to one in this third regime, with $\sigma$ depending on the level of smoothness. This can
significantly reduce the impact of estimation error on policy quality by turning the problem into one that
is essentially Markovian. More specifically, if at state $\mathfrak{b}$, we know that the optimal (stationary) policy takes
action $a$ for the next $k$ decision periods, we consider a reparameterization ${\cal M}^{\times k}$ of the belief-state
MDP where, at $\mathfrak{b}$, all actions are executed $k$ times in a row, no matter what the subsequent $k$ states are.
In this new problem, the Q-value of the optimal repeated action $Q^{\ast}(\mathfrak{b},a^{\times k})$ 
is the same as that of its event-level counterpart $Q^{\ast}(\mathfrak{b}, a)$,
since the same sequence of expected rewards will be generated.
Conversely, all suboptimal actions incur a cumulative \emph{reduction} in Q-value in ${\cal M}^{\times k}$ 
since their suboptimality \emph{compounds} over $k$ periods. Thus, in ${\cal M}^{\times k}$, the optimal policy 
$\pi^{\times k \ast}$
generates the same cumulative discounted return as the event-level optimal policy,
while the advantage of $a^{\times k}$ over any other repeated action $a'^{\times k}$ at $\mathfrak{b}$ is larger than that of $a$ over $a'$ in the event-level problem.

To derive bounds, note that,
for an $L$-smooth POMDP,
at any state where the advantage is at least $2kL$, the optimal action persists for the 
next $k$ periods (its Q-value can decrease by at most $L$ while that of the second-best can at most increase by $L$). 
If we apply aggregation only at such states, the advantage increases to some value $\Sigma$, putting us
in regime 3 (i.e., the advantage is either less than $\sigma = 2kL$ or more than $\Sigma$).
Of course, we cannot ``cherry-pick'' only states with high advantage for aggregation; but aggregating
over all states induces some loss due to the inability to switch actions quickly. We bound that cost
in computing $\sigma$ and $\Sigma$.
This allows us to first lower bound the regret of the best $k$-aggregate policy:
\begin{theorem}
\label{thm:approxbound}
Let $k$ be a fixed horizon, and let $Q^\ast$---the event-level optimal $Q$ function---be $L$-smooth. Then for all $\mathfrak{b}$, $|V^\ast(\mathfrak{b}) - V^{\times k\ast}(\mathfrak{b})| \leq \frac{2kL}{1-\gamma}$, where $ V^{\times k\ast}(\mathfrak{b})$ is the value of state $\mathfrak{b}$ under an optimal $k$-aggregate policy.\footnote{The reparameterized problem 
is also an MDP, so 
the optimal value function 
and deterministic policy are well-defined. }
\end{theorem}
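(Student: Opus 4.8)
The plan is to establish the two halves of the inequality separately. The bound $V^{\times k\ast}(\mathfrak{b})\le V^\ast(\mathfrak{b})$ is immediate, since a $k$-aggregate policy just executes a realizable sequence of event-level actions and so cannot exceed the unrestricted optimum. The work is therefore to show $V^\ast(\mathfrak{b})-V^{\times k\ast}(\mathfrak{b})\le\frac{2kL}{1-\gamma}$ for every $\mathfrak{b}$, and for this I would use a concrete competitor: let $\bar\pi$ be the $k$-aggregate policy that, at the start of each length-$k$ block, commits to the event-level greedy action $a^\ast(\mathfrak{b})=\arg\max_a Q^\ast(\mathfrak{b},a)$ of the block-start belief $\mathfrak{b}$ and repeats it $k$ times. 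Since $V^{\times k\ast}\ge V^{\bar\pi}$, it suffices to bound $V^\ast-V^{\bar\pi}$.

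Fix a block-start belief $\mathfrak{b}_0$, let $a=a^\ast(\mathfrak{b}_0)$ (so $V^\ast(\mathfrak{b}_0)=Q^\ast(\mathfrak{b}_0,a)$), and let $\mathfrak{b}_0,\dots,\mathfrak{b}_k$ be the random beliefs visited while $a$ is repeated $k$ times. Unrolling $Q^\ast(\mathfrak{b}_j,a)=R(\mathfrak{b}_j,a)+\gamma\,\E[V^\ast(\mathfrak{b}_{j+1})\mid\mathfrak{b}_j]$ one step at a time and substituting $V^\ast(\mathfrak{b}_j)=Q^\ast(\mathfrak{b}_j,a)+A(a,\mathfrak{b}_j)$ at each intermediate $j$ gives
\[
V^\ast(\mathfrak{b}_0)=\E\Big[\sum_{j=0}^{k-1}\gamma^j R(\mathfrak{b}_j,a)+\gamma^k V^\ast(\mathfrak{b}_k)+\sum_{j=1}^{k-1}\gamma^j A(a,\mathfrak{b}_j)\Big].
\]
The same unrolling of $V^{\bar\pi}(\mathfrak{b}_0)$ is exact (since $\bar\pi$ genuinely repeats $a$) and produces the identical reward terms and tail $\gamma^k V^{\bar\pi}(\mathfrak{b}_k)$ but no advantage terms. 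Subtracting, using $V^\ast(\mathfrak{b}_k)-V^{\bar\pi}(\mathfrak{b}_k)\le\Delta$ where $\Delta=\sup_{\mathfrak{b}}\big(V^\ast(\mathfrak{b})-V^{\bar\pi}(\mathfrak{b})\big)\ge 0$, and taking a supremum over $\mathfrak{b}_0$ yields $\Delta\le\gamma^k\Delta+\sup_{\mathfrak{b}_0}\E\big[\sum_{j=1}^{k-1}\gamma^j A(a,\mathfrak{b}_j)\big]$.

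Next I would bound each intermediate advantage via $L$-smoothness. Because $\mathfrak{b}_j$ is reached from $\mathfrak{b}_0$ by one-step transitions all under $a$, telescoping the smoothness bound along the committed trajectory gives $Q^\ast(\mathfrak{b}_j,a)\ge Q^\ast(\mathfrak{b}_0,a)-jL=V^\ast(\mathfrak{b}_0)-jL$, and arguing the same way for the optimal value, $V^\ast(\mathfrak{b}_j)\le V^\ast(\mathfrak{b}_0)+jL$; hence $A(a,\mathfrak{b}_j)=V^\ast(\mathfrak{b}_j)-Q^\ast(\mathfrak{b}_j,a)\le 2jL\le 2kL$. Substituting, $\Delta\le\gamma^k\Delta+2kL\sum_{j=1}^{k-1}\gamma^j$, so $\Delta(1-\gamma^k)\le 2kL\,\frac{\gamma-\gamma^k}{1-\gamma}\le 2kL\,\frac{1-\gamma^k}{1-\gamma}$, and therefore $\Delta\le\frac{2kL}{1-\gamma}$, which (with $V^{\times k\ast}\ge V^{\bar\pi}$) gives the theorem.

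The telescoping expansions and the geometric-series bookkeeping are routine. The step I expect to be the real crux is the intermediate-advantage bound $A(a,\mathfrak{b}_j)\le 2kL$: it needs $L$-smoothness not only for the committed action $a$ (which the reachability hypothesis in the definition supplies directly, as every one-step transition along the trajectory is under $a$) but also for the best competing action at each $\mathfrak{b}_j$ — i.e., effectively Lipschitzness of $V^\ast$ across one-step-reachable transitions. This is exactly the informal claim in the text that the optimal action's Q-value can drop by at most $L$ per step while the second-best's can rise by at most $L$; making it airtight amounts to applying $L$-smoothness in the form $|Q^\ast(\mathfrak{b},a')-Q^\ast(\mathfrak{b}',a')|\le L$ for every action $a'$ whenever $\mathfrak{b}'$ transitions to $\mathfrak{b}$ in one step, the natural reading when latent state evolves slowly.
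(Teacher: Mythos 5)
Your proof is correct and rests on the same core idea as the paper's: exhibit the concrete $k$-aggregate competitor that, at each block start, re-commits to the event-level optimal action, and use $L$-smoothness to argue that within a block the repeated action's $Q^\ast$-value can fall by at most $L$ per step while the best competitor's can rise by at most $L$, so the per-step advantage forgone is at most $2kL$. The only real difference is bookkeeping: the paper routes the argument through its ``counterfactual Q-values'' lemma (a performance-difference identity), writing $V^\ast(\mathfrak{b}) - V^{\rho}(\mathfrak{b})$ as a single discounted sum of per-step terms $Q^\ast(\mathfrak{b}_i,\pi^\ast(\mathfrak{b}_i)) - Q^\ast(\mathfrak{b}_i,\rho(\mathfrak{b}_i))$, each bounded by $2kL$; you instead unroll the Bellman equation over one block and close a $\gamma^k$-contraction on $\Delta = \sup_{\mathfrak{b}}(V^\ast(\mathfrak{b})-V^{\bar\pi}(\mathfrak{b}))$, which is self-contained (no auxiliary lemma) and even yields the marginally sharper constant $2kL(\gamma-\gamma^k)/\bigl((1-\gamma)(1-\gamma^k)\bigr)$ via the $2jL$ per-step bound. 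You are also right to flag that the step $V^\ast(\mathfrak{b}_j)\le V^\ast(\mathfrak{b}_0)+jL$ needs $L$-smoothness to hold for \emph{every} action's $Q^\ast$ across a transition taken under $a$, not just for $a$ itself; the paper's own proof silently uses this same strengthened reading (``the Q-value of a suboptimal action $\ldots$ increases respectively by $L$''), so this is a shared implicit assumption rather than a gap in your argument relative to theirs.
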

This theorem is proved by constructing a policy which switches actions every $k$ events and showing that it has bounded regret. This policy, at the start of any $k$-event period, adopts the optimal action from the unaggregated MDP at the initiating state.
Due to smoothness, Q-values cannot drift by more than $kL$ during the period, after which the policy corrects itself.
This, together with the reasoning abouve, offers an amplification guarantee:
\begin{theorem}
\label{thm:amplification_agg}
In an $L$-smooth MDP, let $k$ be a fixed repetition horizon. For any belief state where
$A(\mathfrak{b})\ge 2kL$,
the $k$-aggregate-horizon advantage is bounded below:
\begin{small}
\begin{align*}
Q^{\times k\ast}&(\mathfrak{b}, a^{\times k}) - Q^{\times k\ast}(\mathfrak{b}, a'^{\times k}) \\
\geq&~ A(\mathfrak{b})\frac{1-\gamma^{k}}{1-\gamma} - 2L\frac{\gamma - (1+k - \gamma k)\gamma^{k+1}}{(1-\gamma)^2} - \frac{2kL}{1-\gamma}.
\end{align*}
\end{small}
\end{theorem}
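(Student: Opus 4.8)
The plan is to unfold both $k$-aggregate $Q$-values over the forced $k$-step window, reduce everything to event-level quantities via a telescoping identity, and then invoke $L$-smoothness. Write $\mathfrak{b}_0 = \mathfrak{b}$ and let $\mathfrak{b}_1,\dots,\mathfrak{b}_k$ be the random belief states obtained by executing $a$ for $k$ consecutive steps, and $\mathfrak{b}'_1,\dots,\mathfrak{b}'_k$ those obtained from $a'$. By construction of $\mathcal{M}^{\times k}$, for $b\in\{a,a'\}$ we have $Q^{\times k\ast}(\mathfrak{b}, b^{\times k}) = \sum_{t=0}^{k-1}\gamma^t\,\E[R(\mathfrak{b}_t,b)] + \gamma^k\,\E[V^{\times k\ast}(\mathfrak{b}_k)]$, all expectations over the transition/observation randomness. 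The algebraic core is the identity $\sum_{t=0}^{k-1}\gamma^t\,\E[R(\mathfrak{b}_t,b)] + \gamma^k\,\E[V^\ast(\mathfrak{b}_k)] = V^\ast(\mathfrak{b}) - \sum_{t=0}^{k-1}\gamma^t\,\E[A(b,\mathfrak{b}_t)]$, obtained by substituting $V^\ast(\mathfrak{b}_t) = R(\mathfrak{b}_t,b) + \gamma\,\E[V^\ast(\mathfrak{b}_{t+1})\mid\mathfrak{b}_t] + A(b,\mathfrak{b}_t)$ and unrolling $t = 0,\dots,k-1$.

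For the optimal repeated action I would show $Q^{\times k\ast}(\mathfrak{b}, a^{\times k}) \geq V^\ast(\mathfrak{b}) - \gamma^k\frac{2kL}{1-\gamma}$. Since $A(\mathfrak{b}) \geq 2kL$ and $L$-smoothness lets $Q^\ast(\cdot,c)$ change by at most $L$ across a single transition for every action $c$, the margin of $a$ over any competitor at $\mathfrak{b}_t$ is at least $A(\mathfrak{b}) - 2tL \geq 0$ for $t\leq k$, so $a$ remains event-level optimal along its own trajectory and every $A(a,\mathfrak{b}_t) = 0$; the identity then collapses to $\sum_{t=0}^{k-1}\gamma^t\,\E[R(\mathfrak{b}_t,a)] + \gamma^k\,\E[V^\ast(\mathfrak{b}_k)] = V^\ast(\mathfrak{b})$. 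Re-introducing the tail correction $\gamma^k\,\E[V^{\times k\ast}(\mathfrak{b}_k) - V^\ast(\mathfrak{b}_k)]$ and invoking Theorem~\ref{thm:approxbound} ($V^{\times k\ast}\geq V^\ast - \frac{2kL}{1-\gamma}$) gives the claimed lower bound.

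For the suboptimal repeated action I would instead use the free inequality $V^{\times k\ast}\leq V^\ast$, so the tail drops out harmlessly, leaving $Q^{\times k\ast}(\mathfrak{b}, {a'}^{\times k}) \leq V^\ast(\mathfrak{b}) - \sum_{t=0}^{k-1}\gamma^t\,\E[A(a',\mathfrak{b}'_t)]$; then I lower-bound the advantages along the $a'$-trajectory. Here $A(a',\mathfrak{b}) \geq A(\mathfrak{b})$ because $a'$ is not event-level optimal at $\mathfrak{b}$ (as $A(\mathfrak{b})>0$), $L$-smoothness gives $Q^\ast(\mathfrak{b}'_t,a') \leq Q^\ast(\mathfrak{b},a') + tL$, and applying smoothness to whichever action is optimal at $\mathfrak{b}'_s$ gives $V^\ast(\mathfrak{b}'_t) \geq V^\ast(\mathfrak{b}) - tL$; hence $A(a',\mathfrak{b}'_t) \geq A(\mathfrak{b}) - 2tL$. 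Subtracting the two branch bounds cancels $V^\ast(\mathfrak{b})$, and plugging this in together with $\sum_{t=0}^{k-1}\gamma^t = \frac{1-\gamma^k}{1-\gamma}$ yields $Q^{\times k\ast}(\mathfrak{b}, a^{\times k}) - Q^{\times k\ast}(\mathfrak{b}, {a'}^{\times k}) \geq A(\mathfrak{b})\frac{1-\gamma^k}{1-\gamma} - 2L\sum_{t=1}^{k-1}t\gamma^t - \gamma^k\frac{2kL}{1-\gamma}$. Weakening via $\gamma^k\leq 1$ and $\sum_{t=1}^{k-1}t\gamma^t\leq\sum_{t=1}^{k}t\gamma^t = \frac{\gamma - (1+k-\gamma k)\gamma^{k+1}}{(1-\gamma)^2}$ gives exactly the stated inequality.

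I expect the main obstacle to be the interaction of smoothness with the two tails. First, the ``$a$ persists for $k$ steps'' claim and the bound $V^\ast(\mathfrak{b}'_t)\geq V^\ast(\mathfrak{b})-tL$ both require $L$-smoothness in the form that $Q^\ast(\cdot,c)$ changes by at most $L$ across a single transition for \emph{every} action $c$, not merely the action that caused the transition --- the reading used in the informal discussion preceding the theorem. Second, one must apply Theorem~\ref{thm:approxbound}'s bound only on the $a$-branch and settle for the trivial $V^{\times k\ast}\leq V^\ast$ on the $a'$-branch: using the full bound on both branches would double a $\frac{2kL}{1-\gamma}$ term and, as $\gamma\to 1$, wipe out the amplification entirely. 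Once the right combination of geometric sums has been isolated, the remaining arithmetic is routine.
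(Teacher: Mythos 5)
Your proof is correct and follows essentially the same route as the paper's: your telescoping identity is the paper's counterfactual Q-value lemma specialized to $k$-repeated actions, your smoothness bound $A(a',\mathfrak{b}'_t)\geq A(\mathfrak{b})-2tL$ reproduces its ``lossless amplification'' lemma, and your asymmetric tail treatment (Theorem~\ref{thm:approxbound} on the $a$-branch, the trivial $V^{\times k\ast}\leq V^\ast$ on the $a'$-branch) is exactly how the paper avoids double-counting the aggregation loss. Your two flagged caveats are well taken --- the paper indeed relies on the same stronger reading of $L$-smoothness --- and your index bookkeeping ($\sum_{t=0}^{k-1}$) is in fact more consistent with the stated geometric-sum constants than the paper's own $\sum_{i=0}^{k}$.
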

This result is especially useful when the event-level advantage is more than 
$\sigma = \frac{2kL}{1-\gamma}$.
In this case, an aggregation horizon of $k$ can mitigate the 
adverse effects of approximating belief state with an $\veps$-sufficient statistic for 
for an $\veps$ up to:
\begin{small}
\[\veps_\mathrm{max} \leq L\frac{k(\gamma-\gamma^k) - \gamma(1 - (1+k - \gamma k)\gamma^{k})}{r_\mathrm{max}}\]
\end{small}
at the cost of the aggregation loss of $\frac{2kL}{1-\gamma}$. 

Figs.~\ref{fig:qvalues:agg_low_noise} and ~\ref{fig:qvalues:agg_w_noise} illustrate the benefit of action aggregation: they show the Q-values of the $k$-aggregated CK model with $k=5$ with both perfect and imperfect state estimation, respectively
(the amount of noise is the same as inFig.~\ref{fig:qvalues:no_agg_w_noise}). As we show Sec.~\ref{sec:empirical}, 
the recovered policies incur very little loss due to state estimation error. 
\ignore{To illustrate the value of noise rejection, consider Fig.~\ref{fig:ideal_noise_rejection}, which plots the hypothetical worst-case loss with $\veps_\mathrm{max}$ as above for $\gamma=0.99, L=0.1, r_\mathrm{max}=1$ versus the post-aggregation $\frac{2kL}{1-\gamma}$ against different levels of $k$. 
}

We conclude with the following observation.
\begin{corollary}
Optimal repeating policies are near-optimal for the event-level problem as $L\rightarrow 0$ and amplification at every state is guaranteed.
\end{corollary}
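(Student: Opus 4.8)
The plan is to read the corollary as two limiting assertions, each of which follows directly from the two theorems already established, and then to make the ``$L \to 0$'' limit precise. Recall that the relevant quantities are: (i) the aggregation regret bound $|V^\ast(\mathfrak{b}) - V^{\times k\ast}(\mathfrak{b})| \le \frac{2kL}{1-\gamma}$ from Theorem~\ref{thm:approxbound}, and (ii) the amplified-advantage lower bound from Theorem~\ref{thm:amplification_agg}. For a fixed repetition horizon $k$ and fixed discount $\gamma \in [0,1)$, both the right-hand side of (i) and the two subtracted ``error'' terms in (ii) are of the form $(\text{constant in }k,\gamma)\cdot L$, hence vanish as $L \to 0$, while the leading term $A(\mathfrak{b})\frac{1-\gamma^k}{1-\gamma}$ in (ii) does not depend on $L$ at all. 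So the first claim (``near-optimal for the event-level problem'') is immediate: for every $\mathfrak{b}$, $|V^\ast(\mathfrak{b}) - V^{\times k\ast}(\mathfrak{b})| \le \frac{2kL}{1-\gamma} \to 0$, uniformly in $\mathfrak{b}$, as $L \to 0$.

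For the second claim (``amplification at every state is guaranteed''), I would first note the hypothesis ``$A(\mathfrak{b}) \ge 2kL$'' in Theorem~\ref{thm:amplification_agg}: as $L \to 0$ this threshold collapses to $0$, so for sufficiently small $L$ the hypothesis is satisfied at \emph{every} belief state with nonzero advantage (and at zero-advantage states the claim is vacuous because both actions are optimal). Then, taking $L$ small enough that the two $O(L)$ terms in the Theorem~\ref{thm:amplification_agg} bound are together smaller than, say, half of $A(\mathfrak{b})\frac{1-\gamma^k}{1-\gamma}$, we get
\begin{small}
\[
Q^{\times k\ast}(\mathfrak{b}, a^{\times k}) - Q^{\times k\ast}(\mathfrak{b}, a'^{\times k}) \ge \tfrac{1}{2}A(\mathfrak{b})\tfrac{1-\gamma^k}{1-\gamma} > A(\mathfrak{b})
\]
\end{small}
as soon as $\frac{1-\gamma^k}{1-\gamma} > 2$, i.e. for $k$ large enough relative to $\gamma$ — and more simply, the aggregate advantage strictly exceeds the event-level advantage $A(\mathfrak{b})$ once the $O(L)$ slack is dominated, which is the precise meaning of ``amplification.'' I would phrase this as: for any fixed $k \ge 2$ there exists $L_0 > 0$ such that for all $L < L_0$ and all $\mathfrak{b}$ with $A(\mathfrak{b}) > 0$, the $k$-aggregate advantage at $\mathfrak{b}$ strictly exceeds the event-level advantage.

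The main thing to be careful about — the only real ``obstacle'' — is the order of quantifiers and the uniformity of the limit. The subtracted terms in Theorem~\ref{thm:amplification_agg} are constants times $L$ that do \emph{not} depend on $\mathfrak{b}$, so the slack can be made uniformly small; but the leading term $A(\mathfrak{b})\frac{1-\gamma^k}{1-\gamma}$ scales with $A(\mathfrak{b})$, which can itself be arbitrarily small across $\mathfrak{b}$. Thus ``amplification at every state'' cannot mean a single additive gap that works simultaneously for all states; it means the \emph{multiplicative} statement that the aggregate advantage is at least (a constant $> 1$ times) the event-level advantage, once $L$ is below a threshold that may depend on how small an advantage we insist on resolving. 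I would state the corollary's conclusion in that multiplicative form and note that this is exactly the sense in which the $L \to 0$ regime escapes the pathology of regime~2 in Section~\ref{sec:problem}. Everything else is just substituting $L \to 0$ into the two displayed bounds, which is routine.
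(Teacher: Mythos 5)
Your proposal is correct and follows exactly the route the paper intends: the paper states this corollary without a separate proof, as an immediate consequence of Theorem~\ref{thm:approxbound} (the regret bound $\frac{2kL}{1-\gamma}$ vanishes as $L\to 0$) and Theorem~\ref{thm:amplification_agg} (the hypothesis $A(\mathfrak{b})\ge 2kL$ and the $O(L)$ loss terms both become negligible). Your added care about the order of quantifiers --- that the threshold on $L$ must depend on the smallest advantage one wishes to resolve, so the guarantee is naturally multiplicative rather than a uniform additive gap --- is a correct and worthwhile sharpening of the informal statement, not a deviation from it.
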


\ignore{
If we further assume that $|Q(a,s) - Q(b,s)| \geq \eta$, we can show that the certain levels of state estimation error can be completely rejected. 

\begin{theorem}
Let $\hat{s}$ be an $\veps$-sufficient statistic and  $|Q(a,s) - Q(b,s)| \geq \eta$. Then, the state estimation error will have no effect on the optimal value function if $L\geq ...$
\end{theorem}
\begin{proof}
TODO
\end{proof}

}

\ignore{
\begin{itemize}
    \item Still missing: a discussion of how aggregation might improve ability to improve state estimation
    \item discuss constant-action aggregation as general illustration of the core principle of advantage amplification. Makes a strong assumption about the domain, but illustrates the main points
    \item spell out constant-action (horizon k) results: key lemmas leading to main theorem on advantage amplification for constant actions; and results relating to conditions under which this will be optimal (probably state specific). Need to relate very specifically to degree of state estimation error.
    \item discuss methods for finding the optimal horizon k
    \item relate to options/macros and hierarchical RL, but emphasize a different motivation for constructing macros
    \item Can we give a more general treatment: design a set of macros/options that serve as a reasonable basis w.r.t.\ advantage amplification?
    \item Give some empirical results (or defer to separate section)
\end{itemize}
} 

\subsection{Temporal Regularization: Switching Cost}
\label{sec:switching_cost}

As discussed above, temporal aggregation is guaranteed to improve learning in 
slow environments. It has, however, certain practical drawbacks due to its inflexibility. 
One such drawback is that, in the non-Markovian setting induced by belief-state approximation,
training data should ideally be collected using a $k$-aggregated
behavior policy.\footnote{This is unnecessary if the system is Markovian,
since $(s,a,r,s')$ tuples may be reordered to emulate any behavioral policy.}
Another drawback arises if the $L$-smoothness assumption is partially violated. For example,
if certain rare events 
cause large changes
in state or reward for short periods, the changes in Q-values may be abrupt, but are harmless from an SNR
perspective if they induce large advantage gaps. An agent ``committed'' to a constant action during
an aggregation period is unable to react to such events. We thus propose a more flexible
advantage amplification mechanism, namely, a \emph{switching-cost regularizer}. 
Intuitively, instead of fixing an aggregation horizon, we impose a fictitious cost (or penalty)
$T$
on the agent whenever it changes its action.

More formally, the goal in the \emph{switching-cost (belief-state) MDP} is to find an optimal
policy defined as:
\begin{small}
\begin{equation}
\label{eq:temp_reg}
\pi^{\ast} = \argmax_{\pi} \sum_t \gamma^t \mathbb{E}_{\pi}\left(R_{t} - T \cdot {\mathbbm 1}[a_t \neq a_{t-1}] \right).
\end{equation}
\end{small}%
This problem is Markovian in the extended state space $\calB\times A$ representing the current
(belief) state and the previously executed action. This state space allows the switching 
penalty to be incorporated into the reward function as 
{\small $R(\mathfrak{b}, a_{t-1}, a_t) = R(\mathfrak{b}, a_{t}) -T \cdot {\mathbbm 1}[a_t \neq a_{t-1}]$}.

The switching cost induces an implicit \emph{adaptive action aggregation}---after executing action 
$a$, the agent will keep repeating $a$ until the cumulative advantage of switching to a 
different action exceeds the switching cost $T$. We can use this insight to bound the maximum 
regret of such a policy (relative to the optimal event-level policy) and also provide
an amplification guarantee, as in the case with action aggregation.

In the case of problems with $2$ actions, we can analyze the action of the switching cost regularizer in a relatively intuitive way. As with Thm.~\ref{thm:approxbound}, we bound the regret induced by the switching cost by 
constructing a policy that behaves as if it had to pay $T$ with every action switch. In 
particular, the optimal policy under this penalty
adopts the action of the event-level optimal policy at some state $\mathfrak{b}_t$, then holds it fixed until its expected regret 
for not switching to a \emph{different} action dictated by the event-level optimal policy exceeds
$T$. Suppose the time at which this switch occurs is ($t+\omega$). The regret of this agent is no 
more than the regret of an agent with the option of paying $T$ upfront in order to follow the event-level optimal policy for $\omega$ steps. We can show that the same regret bound holds if the 
agent were paying to switch to the best fixed action for $\omega$ steps instead of following the
event-level optimal policy. This allows derivation of the following bound:
\begin{theorem}
\label{thm:regret_switch}
The regret of the optimal switching cost policy for a $2$-action MDP is less than $\frac{2\kappa L}{1-\gamma}$, where
\begin{small}
\[\kappa = \frac{\log\gamma + (\gamma - 1)W\left(\frac{\gamma^{1/(1-\gamma)}}{\gamma-1}\left(\frac{(1-\gamma)^2}{2\gamma L}T - 1\right)\log\gamma\right)}{(\gamma - 1)\log\gamma},\]
\end{small}%
and where $W$ is the Lambert W-function \cite{corless_lambert}.
\end{theorem}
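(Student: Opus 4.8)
The plan is to follow the template of Thm.~\ref{thm:approxbound}: rather than reason about the optimal switching-cost policy directly, I would exhibit one concrete feasible policy $\tilde\pi$ on the extended (Markovian) state space $\calB\times\calA$ and bound \emph{its} shortfall from $V^{\ast}$. Since the switching-cost MDP's optimal value $V_T^{\ast}$ is at least $V_T^{\tilde\pi}$, and since the true (switching-cost-free) return of any policy is no smaller than its switching-cost-penalized value, a bound $V^{\ast}(\frakb)-V_T^{\tilde\pi}(\frakb)\le\frac{2\kappa L}{1-\gamma}$ implies the claimed regret bound. The policy $\tilde\pi$ is the ``lazy tracker'' of the preamble: each time it commits it picks $a=\pi^{\ast}(\frakb)$ (the event-level optimal action at the committing belief state) and repeats $a$ along the realized trajectory until the discounted regret it would incur by continuing to refuse the switch that $\pi^{\ast}$ prescribes first reaches $T$, at which point it pays $T$ and re-commits.

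The crux is a structural lemma special to the two-action case. Writing $a$ for the held action and $a'$ for the other, we have $V^{\ast}(\frakb)=\max\{Q^{\ast}(\frakb,a),Q^{\ast}(\frakb,a')\}$; applying $L$-smoothness along the realized trajectory to each action, both $Q^{\ast}(\cdot,a)$ and $Q^{\ast}(\cdot,a')$ change by at most $L$ per step. Hence the held action's advantage $V^{\ast}(\frakb)-Q^{\ast}(\frakb,a)$ is $2L$-Lipschitz in trajectory time, and since $a$ was optimal at the instant of commitment, the per-step regret at step $j$ of a holding period is at most $2Lj$ ($Q^{\ast}(\cdot,a)$ can fall by $Lj$ while $Q^{\ast}(\cdot,a')$ rises by $Lj$). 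By the standard policy-difference decomposition, the discounted regret over a holding period equals $\sum_{j\ge1}\gamma^{j}\bigl(V^{\ast}(\frakb_{j})-Q^{\ast}(\frakb_{j},a)\bigr)$, hence is dominated by $\sum_{j\ge1}2Lj\,\gamma^{j}$. Since $\tilde\pi$ recommits once this sum reaches $T$, any holding period in which $\pi^{\ast}$ ever requests a switch lasts at most $\kappa$ steps, where $\kappa$ solves
\[T=\sum_{j=1}^{\kappa}2Lj\,\gamma^{j}=\frac{2L\gamma\bigl(1-(\kappa+1)\gamma^{\kappa}+\kappa\gamma^{\kappa+1}\bigr)}{(1-\gamma)^{2}}.\]

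Solving this equation for $\kappa$ is the one non-mechanical computation, and it is where the Lambert $W$ function enters: the substitution $m=\kappa(1-\gamma)+1$ reduces it to $m\,e^{m\log\gamma/(1-\gamma)}=c$, i.e. $xe^{x}=c'$ after rescaling, with solution $x=W(c')$; unwinding the substitutions reproduces the stated closed form for $\kappa$. With holding periods capped at length $\kappa$, the bound closes as in Thm.~\ref{thm:approxbound}: outside holding periods $\tilde\pi$ agrees with $\pi^{\ast}$ and has zero regret, and inside one the per-step advantage is at most $2Lj\le2\kappa L$, so $\tilde\pi$'s per-step regret never exceeds $2\kappa L$ and its discounted total is at most $\frac{2\kappa L}{1-\gamma}$.

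The delicate point, and the one I expect to be the main obstacle, is reconciling the $T$ that $\tilde\pi$ actually pays on each recommit with this budget. One must argue, over a single commit-to-recommit cycle, that the lazy-switching regret (at most $T$ by the stopping rule, and strictly less if $\pi^{\ast}$ switches back first) together with the paid $T$ can still be charged against $\frac{2\kappa L}{1-\gamma}$. This is where the chain of comparison agents in the preamble is needed (first an agent allowed to pay $T$ upfront to follow $\pi^{\ast}$ for $\omega$ steps, then one allowed to pay $T$ to lock in the best fixed action for $\omega$ steps), together with the elementary domination $j\gamma^{j}\le\kappa\gamma^{j-1}$, which keeps $T=\sum_{j\le\kappa}2Lj\gamma^{j}$ small relative to $2\kappa L\frac{1-\gamma^{\kappa}}{1-\gamma}$ so that the geometric series of paid costs across cycles is itself absorbed. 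One also has to verify that the length-$\kappa$ linear-drift trajectory is genuinely the worst case, rather than some irregular oscillation of the event-level optimal policy; the Lambert-$W$ manipulation, by contrast, is routine once the defining equation for $\kappa$ is identified.
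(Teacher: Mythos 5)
Your construction is essentially the paper's: the same lazy-tracking proxy policy that adopts $\pi^{\ast}$'s action and holds it until the accumulated discounted regret reaches $T$, the same smoothness-driven bound of $2Lj$ on the held action's disadvantage after $j$ steps, the identical defining equation $T=2\gamma L\,\frac{1+\kappa\gamma^{\kappa+1}-(1+\kappa)\gamma^{\kappa}}{(1-\gamma)^{2}}$ solved via the Lambert $W$ function, and the same final summation to $\frac{2\kappa L}{1-\gamma}$. The one step that fails as literally written is the claim that ``any holding period in which $\pi^{\ast}$ ever requests a switch lasts at most $\kappa$ steps.'' It does not: if $\pi^{\ast}$ oscillates or the held action's disadvantage grows slowly, the discounted regret can remain below $T$ indefinitely, so holding periods are unbounded and you cannot conclude $j\le\kappa$ in order to write the per-step regret as $2Lj\le 2\kappa L$. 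This is precisely the ``irregular oscillation'' worry you flag at the end, and it is not resolved by showing the linear-drift trajectory is worst-case for the \emph{length} of a holding period. The paper closes it by contraposition on the \emph{per-step} quantity instead: if at some time $i$ the expected per-step regret $\E\left[Q^{\ast}(\frakb_i,\pi^{\ast}(\frakb_i))-Q^{\ast}(\frakb_i,a)\right]$ exceeded $2\kappa L$, then (two actions, each sequence of expected $Q$-values being $L$-smooth) the two sequences must have been separating throughout the preceding $\kappa$ steps, with gaps at least $0,2L,\dots,2\kappa L$, whose discounted sum already exceeds $T$; hence the policy would have switched before time $i$, a contradiction. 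That yields the per-step bound $2\kappa L$ for holding periods of arbitrary length, which is all the final $\frac{2\kappa L}{1-\gamma}$ summation needs.

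On the paid-cost accounting you single out as the main obstacle: the paper does not charge the paid $T$'s against the $\frac{2\kappa L}{1-\gamma}$ budget as you propose (i.e., it does not bound $V^{\ast}-V^{\tilde\pi}_{T}$ with the penalties included). It bounds only the \emph{unregularized} tracking regret of its proxy $\sigma$ and transfers the bound through the comparison that $\sigma$ can do no better than a policy paying $T$ upfront at each switch, hence no better than the optimal switching-cost policy. Your route would additionally require absorbing the geometric series of paid costs across cycles, which the domination $j\gamma^{j}\le\kappa\gamma^{j-1}$ does not obviously deliver; adopting the paper's comparison chain sidesteps that calculation entirely.
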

This leads to an amplification result, analogous to Thm.~\ref{thm:amplification_agg}:
\begin{theorem}
\label{thm:amplification_switch}
Let $\kappa$ be as in Thm.~\ref{thm:regret_switch}.  Any state whose advantage in the 
event-level optimal policy is at least $(1 + \frac{1}{1-\gamma})2\kappa L$
has an advantage of at least $2T$ in the switching-cost regularized optimal policy.
\end{theorem}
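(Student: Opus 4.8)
The plan is to transplant the argument behind Thm.~\ref{thm:amplification_agg}, replacing the fixed aggregation horizon $k$ by the effective commitment horizon $\kappa$ that the switching cost induces (cf.\ the informal description preceding Thm.~\ref{thm:regret_switch}). I would work in the Markovian reformulation of \eqref{eq:temp_reg} over the extended space $\calB\times\calA$ with reward $R(\frakb,a_{t-1},a_t)=R(\frakb,a_t)-T\cdot\mathbbm{1}[a_t\neq a_{t-1}]$, writing $\widetilde Q^\ast$ and $\widetilde V^\ast$ for its optimal $Q$- and value functions. Fix $\frakb$ with $A(\frakb)\ge(1+\tfrac1{1-\gamma})2\kappa L$; let $a=a^\ast(\frakb)$ be the event-level optimal action and $a'$ the other one (using here that there are only two actions). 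It suffices to show $\widetilde Q^\ast((\frakb,a_{prev}),a)-\widetilde Q^\ast((\frakb,a_{prev}),a')\ge 2T$ for either $a_{prev}\in\{a,a'\}$: with two actions this difference \emph{is} the advantage of the extended state, so the claim follows (and $a$ is seen to be switching-cost optimal at $\frakb$ for either $a_{prev}$).

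For the $a$-term, committing to $a$ and unrolling one step gives $\widetilde Q^\ast((\frakb,a_{prev}),a)=R(\frakb,a)-T\,\mathbbm{1}[a_{prev}\neq a]+\gamma\E_{\frakb'}[\widetilde V^\ast(\frakb',a)]$; applying the per-state form of Thm.~\ref{thm:regret_switch} ($\widetilde V^\ast(\frakb',a)\ge V^\ast(\frakb')-\tfrac{2\kappa L}{1-\gamma}$, since the comparison policy constructed there starts from an arbitrary state) together with $R(\frakb,a)+\gamma\E_{\frakb'}[V^\ast(\frakb')]=Q^\ast(\frakb,a)=V^\ast(\frakb)$ yields $\widetilde Q^\ast((\frakb,a_{prev}),a)\ge V^\ast(\frakb)-T\,\mathbbm{1}[a_{prev}\neq a]-\tfrac{2\gamma\kappa L}{1-\gamma}$. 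For the $a'$-term I would use a bound strictly better than the trivial $\widetilde Q^\ast((\frakb,a_{prev}),a')\le Q^\ast(\frakb,a')$: by $L$-smoothness, $a'$ stays suboptimal for roughly $\kappa$ steps before paying $T$ to correct becomes worthwhile, so the regret of holding $a'$ \emph{compounds} over that window exactly as in the proof of Thm.~\ref{thm:amplification_agg}, now with the random correction time $\omega\le\kappa$ of Thm.~\ref{thm:regret_switch}. This produces a bound of the form $\widetilde Q^\ast((\frakb,a_{prev}),a')\le V^\ast(\frakb)-A(\frakb)-T\,\mathbbm{1}[a_{prev}=a]-(\text{compounded }O(\kappa L)\text{ holding regret})$. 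Subtracting the two estimates cancels $V^\ast(\frakb)$, keeps $A(\frakb)$ on the favorable side, and leaves only $O(\kappa L)$ and $O(T)$ residual terms; the binding case is $a_{prev}=a'$ (costing $T$ on the left, nothing on the right).

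The remaining step is bookkeeping: substitute $A(\frakb)\ge 4\kappa L+\tfrac{2\gamma\kappa L}{1-\gamma}$ (the hypothesis, since $1+\tfrac1{1-\gamma}=2+\tfrac{\gamma}{1-\gamma}$) and check that the residual collapses to an inequality of the shape $2T\le c\,\kappa L$. This is a statement purely about the definition of $\kappa$ in Thm.~\ref{thm:regret_switch}: $\kappa$ is defined as the solution of the equation equating the compounded $O(\kappa L)$ holding regret over $\kappa$ steps with the switching penalty $T$ --- which is where the Lambert $W$ enters --- so the residual inequality should hold essentially by construction, the $2\kappa L$ slack in the hypothesis playing the role of the ``persistence margin'' $2kL$ of Thm.~\ref{thm:amplification_agg} and the $\tfrac{2\kappa L}{1-\gamma}$ the role of the Thm.~\ref{thm:approxbound}-type value shift. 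I expect this reconciliation of constants to be the main obstacle: the crude $\widetilde Q^\ast((\frakb,a_{prev}),a')\le Q^\ast(\frakb,a')$ bound alone is \emph{not} enough (it would demand $T\lesssim\tfrac43\kappa L$, which need not hold), so the compounded holding-regret term really must be carried through and the Lambert-$W$ expression manipulated, and one should separately verify the $T\to0$ boundary to confirm the bound degrades gracefully.
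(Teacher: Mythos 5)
Your overall strategy does match the paper's: work in the extended state space $\calB\times\calA$, lower-bound the $Q$-value of committing to the event-level optimal action $a$ via the $\frac{2\kappa L}{1-\gamma}$ regret bound of Thm.~\ref{thm:regret_switch}, upper-bound the $Q$-value of committing to $a'$ by the holding regret compounded over the (random, at least $\kappa$-step) window before the regularized optimal policy finds it worthwhile to pay $T$ and switch, and close the argument using the equation that defines $\kappa$. One structural difference: the paper pivots through the identity $\widetilde Q^\ast((\frakb,a),a')=\widetilde Q^\ast((\frakb,a'),a')-T$, proves $\widetilde Q^\ast((\frakb,a),a)-\widetilde Q^\ast((\frakb,a'),a')\ge T$, and thereby obtains the $2T$ advantage only at extended states whose remembered action is the event-level optimal one. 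Your stated goal of a $2T$ gap for \emph{both} values of $a_{prev}$ is strictly stronger and does not follow from these estimates: at $(\frakb,a')$ the same identities give only $\widetilde Q^\ast((\frakb,a'),a)-\widetilde Q^\ast((\frakb,a'),a')\ge 0$. Your remark that ``the binding case is $a_{prev}=a'$'' is therefore where the plan would actually break.

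The other genuine gap is the final step you defer, which is where all the work is. It is not ``essentially by construction,'' and the shape you predict for the residual, $2T\le c\,\kappa L$, cannot be right: the defining equation of $\kappa$ is exactly $T=2\gamma L\,\frac{1+\kappa\gamma^{\kappa+1}-(1+\kappa)\gamma^{\kappa}}{(1-\gamma)^2}=2L\sum_{i=0}^{\kappa}i\gamma^{i}$, so $T$ scales like $\kappa^{2}L$ as $\gamma\to1$ and no bound linear in $\kappa L$ absorbs it --- the same objection you correctly raised against the crude bound $\widetilde Q^\ast((\frakb,a_{prev}),a')\le Q^\ast(\frakb,a')$ applies to your own residual. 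What actually closes the argument is this: under the worst case permitted by $L$-smoothness (the $Q$-value of $a$ falling and that of $a'$ rising at rate $L$ per step), the hypothesis $A(\frakb)\ge(1+\frac{1}{1-\gamma})2\kappa L$ forces the holding window to last at least $\kappa$ steps, and then the compounded gap satisfies $\sum_{i=0}^{\kappa}\gamma^{i}\bigl(A(\frakb)-2iL\bigr)-\frac{2\kappa L}{1-\gamma}\ge T$ precisely because the subtracted term $2L\sum_{i=0}^{\kappa}i\gamma^{i}$ \emph{equals} $T$ by the definition of $\kappa$, leaving $A(\frakb)\frac{1-\gamma^{\kappa+1}}{1-\gamma}$ to cover $T$ plus the aggregation-type slack. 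Until that computation is carried through (and the set of extended states the claim covers is pinned down), what you have is a correct outline of the paper's route rather than a proof.
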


\subsection{Empirical Illustration}
\label{sec:empirical}

\begin{figure*}[ht]
    \centering
    \begin{subfigure}[b]{0.24\textwidth}
        \includegraphics[width=\textwidth]{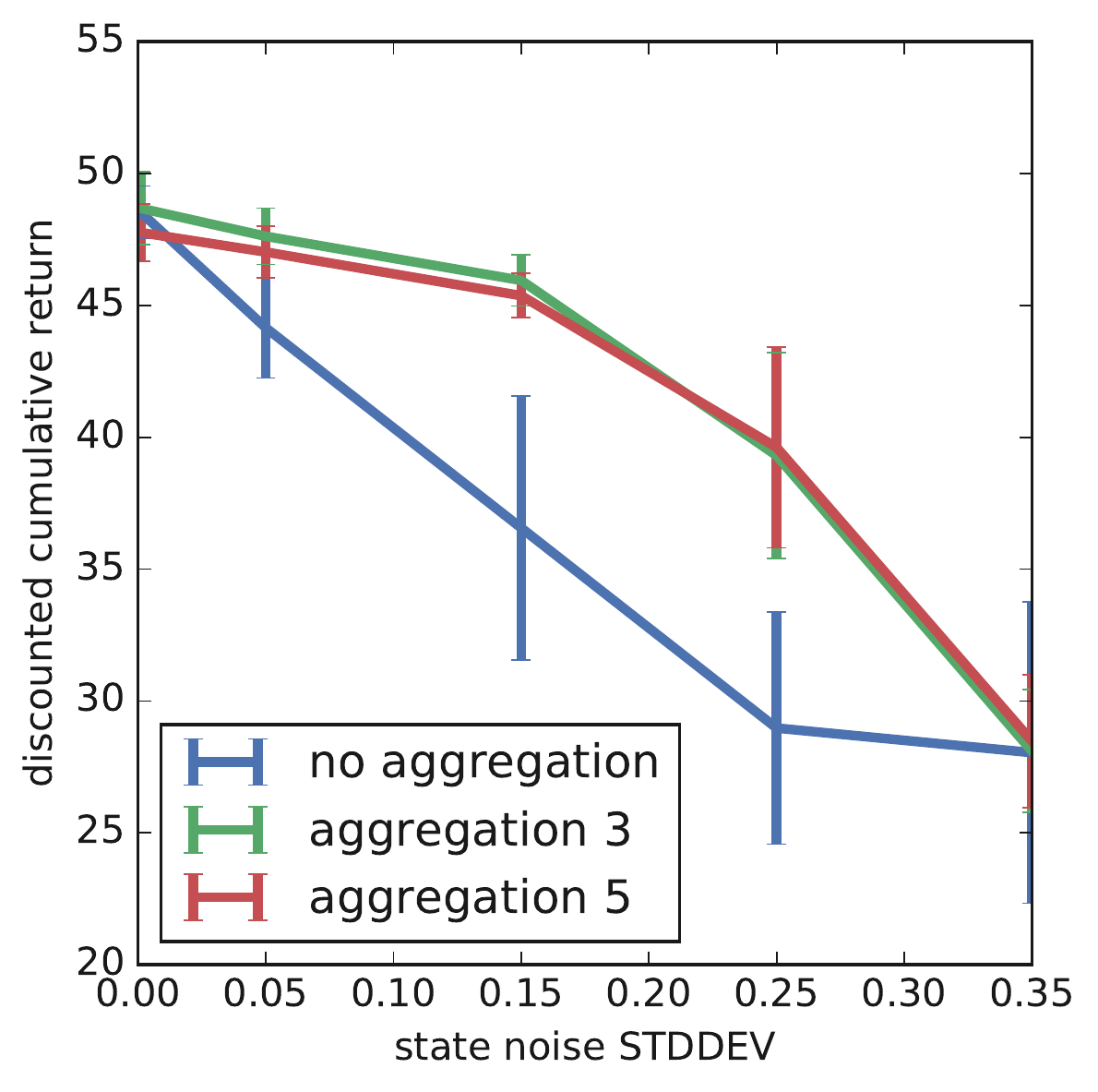}
        \label{fig:gull1}
    \end{subfigure}
    ~ 
    \begin{subfigure}[b]{0.24\textwidth}
        \includegraphics[width=\textwidth]{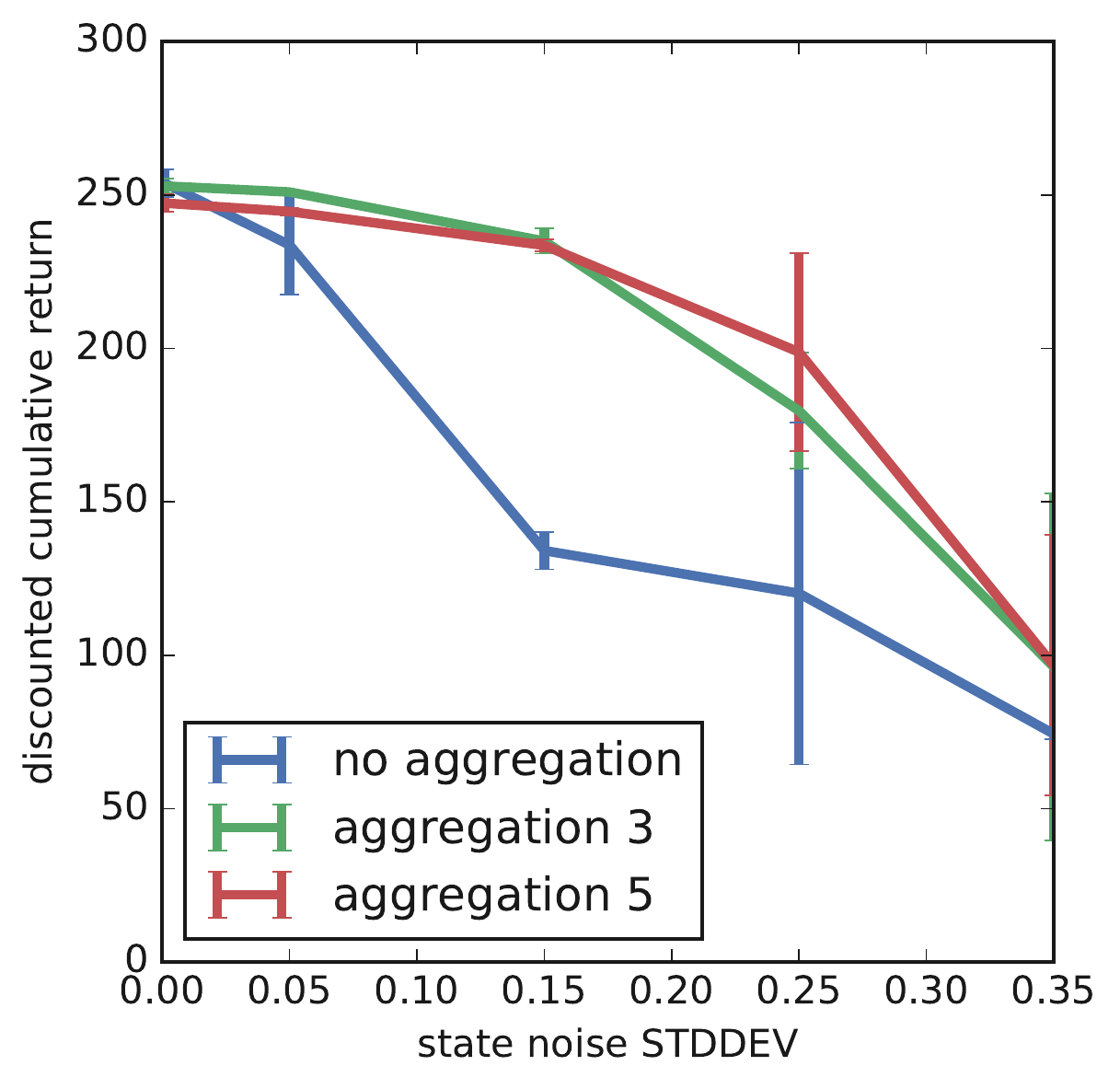}
        \label{fig:tiger1}
    \end{subfigure}
        \begin{subfigure}[b]{0.24\textwidth}
        \includegraphics[width=\textwidth]{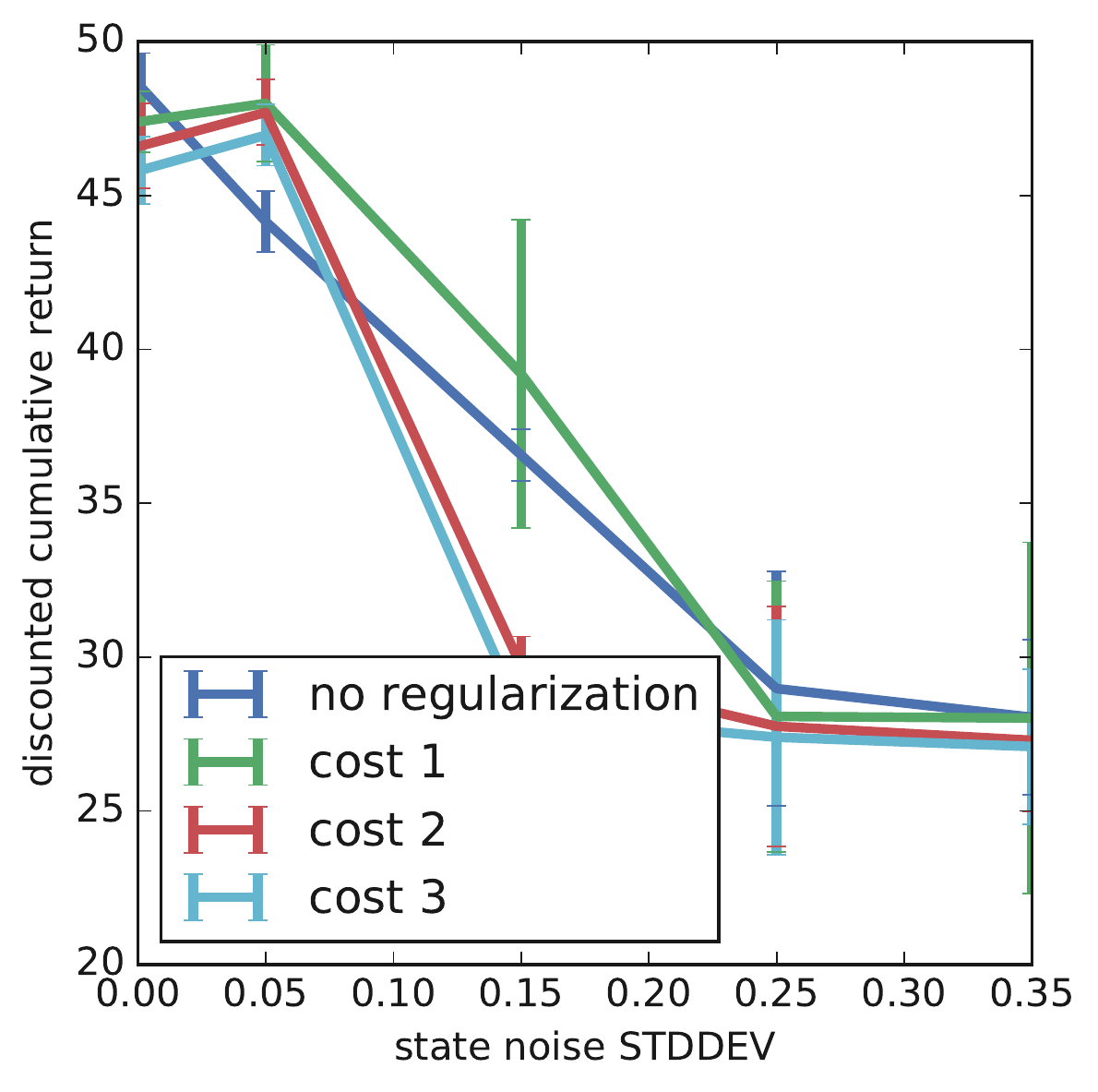}
        \label{fig:gull2}
    \end{subfigure}
    ~ 
    \begin{subfigure}[b]{0.24\textwidth}
        \includegraphics[width=\textwidth]{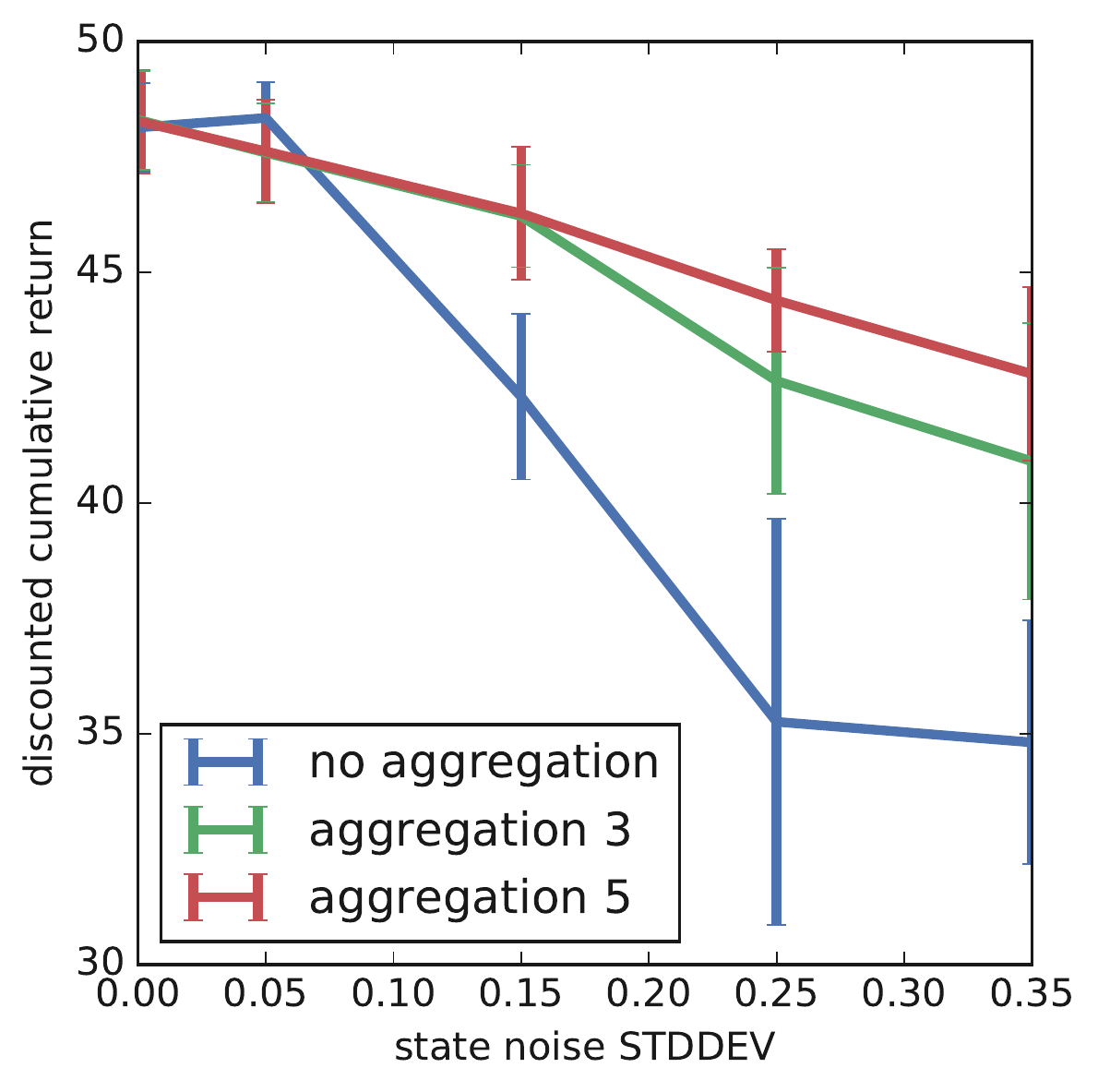}
        \label{fig:tiger2}
    \end{subfigure}
      \vspace*{-3mm}
      \caption{Experimental results.}\label{fig:experiment1}
      \vspace*{-3mm}
\end{figure*}

We experiment with synthetic models to demonstrate the theoretical results above. In a
first experiment, we apply both action aggregation and switching cost regularization to
the simple $\choc$-$\kale$ POMDP,
with parameters $\beta= 0.9, \tau = 0.25, \mu_{\mathrm{Choc}} = 8, \mu_{\mathrm{Kale}} = 2$, and $\sigma_{\mathrm{Choc}} = \sigma_{\mathrm{Kale}} = 0.5$. 
To illustrate the effects of faulty state estimation, we corrupt the satisfaction level $s$ with
noise drawn from a Gaussian (mean $0$, stdev.\ $\sigma_N$), truncated on $[-1, 1]$. As we increase $\sigma_N$,
state estimation becomes worse. To mitigate this effect, we apply aggregations of $3,5$ actions at discounts of $\gamma=0.95$ and $0.99$ (Fig.~\ref{fig:experiment1}a,b) and switching costs of $1, 2, 3$ (Fig.~\ref{fig:experiment1}c). For each parameter setting, we train $10$ tabular policies with $Q$-learning,
discretizing state space into $50$ buckets. For each training run, we 
roll-out $30000$ event-level transitions, exploring using actions taken uniformly at random---aggregated actions in the aggregation setting---then evaluate the discounted return of each policy using $100$ Monte Carlo rollouts of length $1000$.
Figs.\ref{fig:experiment1}a, b and c show the average performance across the $10$ training runs (with the 95\% confidence interval) as a function of the $\sigma_N$. 
We see that action aggregation has a profound effect on solution quality, improving the performance of the policy 
up to a factor of almost $2$ (for $\gamma=0.99$). Switching cost regularization has a more subtle effect, providing more modest improvements in performance. We observe that over-regularized policies actually perform worse than the unregularized policy. We conjecture that this stark difference in performance is due to action aggregation 
having a double effect on the value estimates---apart from amplification, it also provides a more favorable behavioral policy.

A second experiment takes a more ``options-oriented'' perspective on the problem. Here, recommendable items have a continuous ``kaleness'' score between 0 and 1, with item $i$'s score denoted $v(i)$. 
At each time step, a set of $7$ items is drawn from 
a $[0,1]$-truncated Gaussian with mean equal to the kaleness score of the previously
consumed item. The RL agent sets a \emph{target kaleness score} $\theta \in \{0, 0.25, 0.5, 0.75, 1\}$ (its action
space). This translates to a specific ``presentation'' of the $7$ items to the user such that
the user is nudged to consume an item whose score is closer to the target. Specifically,
the user chooses an item $i$ using a softmax distribution: $P(i) \propto \exp(-|v(i) - \theta|/\lambda)$, with
temperature $\lambda=0.2$.
The results are shown in Fig.~\ref{fig:experiment1}d and exhibit a comparable level of improvement as in the
binary-action case.


\section{Related Work}
\label{sec:related}

The study of time series at different scales of granularity has a long-standing history in econometrics, where the main object of interest seems to be the behavior of various autoregressive models under aggregation
\cite{silvestrini2008temporal}, however, the behavior of aggregated systems under control does not seem to have been investigated in that field.  

In RL, time granularity arises in several contexts. Classical semi-MDP/options theory employs temporal aggregation to organize the policy space into a hierarchy, where a pre-specified sub-policy, or \emph{option} is executed for
some period of time (termination is generally part of the option specification)
\cite{Sutton-etal:AIJ99}.
That options might help with partial observability 
(``state noise'') has been suggested---e.g., 
Daniel et al.~\scite{daniel_hireps}, who also informally suggest that reduced control frequency can improve SNR; however the task of characterizing this phenomenon formally has not
been addressed to the best of our knowledge. The \emph{learning to repeat} framework (see \cite{sharma2017} and references therein) provide a modeling perspective that allows an agent to choose an action-repetition granularity as part of the action space itself, but does not study these models theoretically.
SNR has played a role in RL, but in different ways than studied here, e.g., as applied to policy gradient (rather
than as a property of the domain) \cite{roberts:nips09}.

The effect of the advantage magnitude (also called action gap) on the quality and convergence of reinforcement learning algorithms was first studied by Farahmand~\scite{fahramand2011}. Bellemare et al.~\scite{bellemare2015} observed that the action gap can be manipulated to improve the quality of learned polices by introducing local policy consistency constraints to the Bellman operator. Their considerations are, however, not bound to specific environment properties.  

Finally, our framework is closely related with the study of regularization in RL and its benefits when dealing with POMDPs. Typically, an entropy-based penalty (or KL-divergence w.r.t.\ to a behavioral policy) is added
to the reward to induce a stochastic policy. This is usually justified in one of several ways:
inducing exploration \cite{nachum_etal:nips17}; accelerating optimization by making improvements monotone \cite{trpo};
and smoothing the Bellman equation and improving sample efficiency \cite{fox2016}. Of special relevance is the work of Thodoroff et al.~\scite{thodoroff2018}, who, akin to this work, exploit the sequential dependence of Q-values for better Q-value estimation. In all this work, however, regularization is simply a price to pay to achieve a side goal (e.g.,
better optimization/statistical efficiency). While stochastic policies often perform better than deterministic ones when state estimation is deficient \cite{singh:icml94}, and methods that exploit this have been proposed in restricting settings (e.g., corrupted rewards \cite{everitt2017}), the connection to regularization has not been made explicitly to the best of our knowledge.


\section{Concluding Remarks}
\label{sec:conclude}

We have developed a framework for studying the impact of belief-state approximation in latent-state
RL problems, especially suited to slowly evolving, highly noisy (low SNR) domains like recommender
systems. We introduced \emph{advantage amplification} and proposed and analyzed
two conceptually simple realizations of it. Empirical study on a stylized domain demonstrated the
tradeoffs and gains they might offer. 

There are a variety of interesting avenues suggested by this work: (i) the study of soft-policy regularization
for amplification (preliminary results are presented in the long version of this paper); (ii) developing 
techniques for constructing more general ``options'' (beyond aggregation) for amplification; (iii) developing
amplification methods for arbitrary sources of modeling error; (iv) conducting more extensive empirical
analysis on real-world domains.



\clearpage

\appendix
\section{Technical Results and Additional Material}
\subsection{Analysis of Action Aggregation}
We start with an observation that frames the subsequent analysis.
\begin{lemma}[Counterfactual Q-values]
\label{lem:qvalues}
Let $\pi$ and $\rho$ be two deterministic policies and let the $Q$-function of $\pi$, $Q^\pi$ be known. Then, for any state $\mathfrak{b}\in\calB$:
\begin{align*}
    V^\pi(\mathfrak{b}) -& V^\rho(\mathfrak{b}) \\
    &= Q^\pi(\mathfrak{b}, \pi(\mathfrak{b})) - Q^\rho(\mathfrak{b}, \rho(\mathfrak{b})) \\
    &= \E_\rho\left[\sum_{i=0}\gamma^i \left(Q^\pi(\mathfrak{b}_i, \pi(\mathfrak{b}_i)) - Q^\pi(\mathfrak{b}_i, \rho(\mathfrak{b}_i))\right)\right],
\end{align*}
where $\E_\rho$ the expectation is over trajectories $(\mathfrak{b}_i)_{i\in \mathbb{N}}$ generated by $\rho$ starting at $\mathfrak{b}$ ($\mathfrak{b}_0 = \mathfrak{b}$).
\end{lemma}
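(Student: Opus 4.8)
The first equality is immediate and carries no content: since $\pi$ and $\rho$ are deterministic, the expectation over actions in Eq.~\eqref{eq:vf} collapses, so $V^\pi(\mathfrak{b}) = Q^\pi(\mathfrak{b},\pi(\mathfrak{b}))$ and $V^\rho(\mathfrak{b}) = Q^\rho(\mathfrak{b},\rho(\mathfrak{b}))$ for every $\mathfrak{b}\in\calB$. The substance of the lemma is the second equality, which I would establish as a performance-difference (telescoping) identity by expanding $Q^\pi$ one step at a time along trajectories drawn from $\rho$.

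Concretely, write $\mathfrak{b}_0=\mathfrak{b}$ and let $(\mathfrak{b}_i)_{i\ge 0}$ evolve under $\rho$, i.e.\ $\mathfrak{b}_{i+1}=U(\mathfrak{b}_i,\rho(\mathfrak{b}_i),z_i)$ with $z_i\sim\Pr(\cdot\mid\mathfrak{b}_i,\rho(\mathfrak{b}_i))$. I would use two observations: (i) $Q^\pi(\mathfrak{b}_i,\pi(\mathfrak{b}_i))=V^\pi(\mathfrak{b}_i)$, again by determinism of $\pi$; and (ii) by the definition of $Q^\pi$ from Eq.~\eqref{eq:vf} with the first action pinned to $\rho(\mathfrak{b}_i)$,
\[ Q^\pi(\mathfrak{b}_i,\rho(\mathfrak{b}_i)) = R(\mathfrak{b}_i,\rho(\mathfrak{b}_i)) + \gamma\sum_{z}\Pr(z\mid\mathfrak{b}_i,\rho(\mathfrak{b}_i))\,V^\pi\big(U(\mathfrak{b}_i,\rho(\mathfrak{b}_i),z)\big) = R(\mathfrak{b}_i,\rho(\mathfrak{b}_i)) + \gamma\,\E_\rho\!\big[V^\pi(\mathfrak{b}_{i+1})\mid\mathfrak{b}_i\big], \]
because the $z$-average is exactly the one-step law of $\mathfrak{b}_{i+1}$ under $\rho$. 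Substituting, the $i$-th summand becomes $V^\pi(\mathfrak{b}_i) - R(\mathfrak{b}_i,\rho(\mathfrak{b}_i)) - \gamma\,\E_\rho[V^\pi(\mathfrak{b}_{i+1})\mid\mathfrak{b}_i]$.

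I would then take $\E_\rho$ of the $\gamma$-discounted sum and split it into two pieces. The terms $\gamma^i V^\pi(\mathfrak{b}_i) - \gamma^{i+1}V^\pi(\mathfrak{b}_{i+1})$ telescope to $V^\pi(\mathfrak{b}_0)$, the tail $\gamma^n\,\E_\rho[V^\pi(\mathfrak{b}_n)]$ vanishing since $|V^\pi|\le r_{\max}/(1-\gamma)$ and $\gamma<1$; the reward terms sum to $\E_\rho[\sum_{i\ge0}\gamma^i R(\mathfrak{b}_i,\rho(\mathfrak{b}_i))]=V^\rho(\mathfrak{b}_0)$. Combining gives $\E_\rho[\sum_i\gamma^i(Q^\pi(\mathfrak{b}_i,\pi(\mathfrak{b}_i))-Q^\pi(\mathfrak{b}_i,\rho(\mathfrak{b}_i)))]=V^\pi(\mathfrak{b}_0)-V^\rho(\mathfrak{b}_0)$, as claimed. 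There is no serious obstacle here; the only points needing care are the interchange of the infinite sum with $\E_\rho$ (justified by bounded rewards/values via dominated convergence, or equivalently by proving the finite-horizon truncation and passing to the limit) and the vanishing of the telescoping tail. I would also state explicitly that ``the trajectory generated by $\rho$'' must use the belief-update semantics $(\calB,U)$ underlying Eq.~\eqref{eq:vf}, so that step (ii) is an exact identity rather than an approximation.
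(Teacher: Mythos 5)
Your proof is correct and is essentially the paper's argument: both rest on the one-step Bellman expansion of $Q^\pi$ along a trajectory generated by $\rho$, with the tail controlled by boundedness of the value functions. The paper merely packages the telescoping as a one-step recursion (inserting the hybrid policy $\rho\pi$ that plays $\rho(\mathfrak{b})$ once and then follows $\pi$, so that $Q^{\rho\pi}(\mathfrak{b},\rho(\mathfrak{b}))=Q^\pi(\mathfrak{b},\rho(\mathfrak{b}))$) and then unrolls it, which yields exactly the telescoping sum you write down directly.
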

\begin{proof}
Let $\rho\pi$ be a non-stationary policy that starting at $\mathfrak{b}$ executes $\rho(\mathfrak{b})$ exactly once, then follows $\pi$ forever. Then,
\begin{align*}Q^\pi(\mathfrak{b}, \pi(\mathfrak{b})) &- Q^\rho(\mathfrak{b}, \rho(\mathfrak{b})) \\
= &\quad Q^\pi(\mathfrak{b}, \pi(\mathfrak{b})) - Q^{\rho\pi}(\mathfrak{b}, \rho\pi(\mathfrak{b}))\\ 
&+ Q^{\rho\pi}(\mathfrak{b}, \rho\pi(\mathfrak{b})) - Q^\rho(\mathfrak{b}, \rho(\mathfrak{b}))\\
= &\quad Q^\pi(\mathfrak{b}, \pi(\mathfrak{b})) - Q^{\pi}(\mathfrak{b}, \rho(\mathfrak{b}))\\ 
&+ Q^{\pi}(\mathfrak{b}, \rho(\mathfrak{b})) - Q^\rho(\mathfrak{b}, \rho(\mathfrak{b}))\\
=&\quad Q^\pi(\mathfrak{b}, \pi(\mathfrak{b})) - Q^{\pi}(\mathfrak{b}, \rho(\mathfrak{b})) \\
&+ \gamma \E_{\mathfrak{b}^\prime \sim \rho}[Q^\pi(\mathfrak{b}^\prime, \pi(\mathfrak{b}^\prime)) - Q^\rho(\mathfrak{b}^\prime, \rho(\mathfrak{b}^\prime))].
\end{align*}
Linearity of expectation and the boundedness of both $Q$ functions ensures that recursive application to $Q^\pi(\mathfrak{b}^\prime, \pi(\mathfrak{b}^\prime)) - Q^\rho(\mathfrak{b}^\prime, \rho(\mathfrak{b}^\prime))$ converges to the desired quantity.
\end{proof}
This allows us to measure the behavior of $k$-aggregate policies (in terms of advantages) relative to atomic ones.

\begin{lemma}
\label{lem:qvalues_repetition}
Let $\mathfrak{b}$ be a state for which the optimal action $a$ does not change within $k$ steps. In other words, for all $\mathfrak{b}^\prime \in {\cal B}^{\times k}(\mathfrak{b})$, $\pi^*(\mathfrak{b}^\prime) = a$, where ${\cal B}^{\times k}(\mathfrak{b})$ denotes all states reachable from $\mathfrak{b}$ in $k$ steps under some action sequence. Then for any other action $a^\prime$,
\begin{align*}
Q^\ast(\mathfrak{b}, a^{\times k}) - & Q^\ast(\mathfrak{b}, a^{\prime\times k})\\
&= \E \left[\sum_{i=0}^k\gamma^i \left(Q^\ast(\mathfrak{b}^{a^\prime}_i, a) - Q^\ast(\mathfrak{b}^{a^\prime}_i, a^\prime)\right)\right]
\end{align*}
where the expectation is over the trajectory $\mathfrak{b}^{a^\prime}_1,\ldots, \mathfrak{b}^{a^\prime}_k$ of states that follow after $\mathfrak{b}$ when taking action ${a^\prime}$ and $\mathfrak{b}^{a^\prime}_0 = \mathfrak{b}$ for notational convenience.
\end{lemma}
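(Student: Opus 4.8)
The plan is to reduce this to the telescoping identity of Lemma~\ref{lem:qvalues}, specialized to a finite $k$-step horizon. First I would use the hypothesis to rewrite the left-hand side: since $\pi^\ast(\mathfrak{b}^\prime)=a$ for every $\mathfrak{b}^\prime\in{\cal B}^{\times k}(\mathfrak{b})$, the trajectory obtained by playing $a$ for $k$ steps from $\mathfrak{b}$ and then following $\pi^\ast$ is exactly the trajectory of $\pi^\ast$ itself, so $Q^\ast(\mathfrak{b},a^{\times k})=V^\ast(\mathfrak{b})$. It therefore remains to expand $V^\ast(\mathfrak{b}) - Q^\ast(\mathfrak{b}, a^{\prime\times k})$, where $Q^\ast(\mathfrak{b},a^{\prime\times k})$ is the return of committing to $a^\prime$ for $k$ steps and then following $\pi^\ast$.

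Next I would introduce the hybrid policies $\rho_0,\dots,\rho_k$, where $\rho_j$ plays $a^\prime$ for its first $j$ steps starting from $\mathfrak{b}$ and follows $\pi^\ast$ thereafter; then $\rho_0=\pi^\ast$ started at $\mathfrak{b}$ and $V^{\rho_k}(\mathfrak{b})=Q^\ast(\mathfrak{b},a^{\prime\times k})$, so the target quantity telescopes as $\sum_{j=0}^{k-1}\bigl(V^{\rho_j}(\mathfrak{b})-V^{\rho_{j+1}}(\mathfrak{b})\bigr)$. The policies $\rho_j$ and $\rho_{j+1}$ agree on their first $j$ actions (both play $a^\prime$), reaching the random state $\mathfrak{b}^{a^\prime}_j$; there $\rho_j$ switches to $\pi^\ast$, contributing continuation value $V^\ast(\mathfrak{b}^{a^\prime}_j)=Q^\ast(\mathfrak{b}^{a^\prime}_j,a)$ (again invoking the hypothesis, valid since $\mathfrak{b}^{a^\prime}_j\in{\cal B}^{\times k}(\mathfrak{b})$ for $j\le k$), whereas $\rho_{j+1}$ plays $a^\prime$ once more and then follows $\pi^\ast$, contributing $Q^\ast(\mathfrak{b}^{a^\prime}_j,a^\prime)$. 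Discounting the common $j$-step prefix and taking the expectation over $\mathfrak{b}^{a^\prime}_j$ gives $V^{\rho_j}(\mathfrak{b})-V^{\rho_{j+1}}(\mathfrak{b})=\gamma^j\,\E\bigl[Q^\ast(\mathfrak{b}^{a^\prime}_j,a)-Q^\ast(\mathfrak{b}^{a^\prime}_j,a^\prime)\bigr]$, and summing over $j$ yields the stated identity. (Equivalently, one can apply Lemma~\ref{lem:qvalues} directly with $\pi=\pi^\ast$ and $\rho$ the policy that plays $a^\prime$ for $k$ steps then $\pi^\ast$: every summand with index $\ge k$ vanishes because $\rho$ and $\pi^\ast$ coincide from step $k$ onward, leaving exactly the finite sum over the initial block.)

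I expect the only real subtlety to be the bookkeeping that licenses the substitution $V^\ast(\mathfrak{b}^{a^\prime}_j)=Q^\ast(\mathfrak{b}^{a^\prime}_j,a)$ for every index occurring in the sum — this rests on the observation that $\mathfrak{b}^{a^\prime}_j$ is reached from $\mathfrak{b}$ in $j\le k$ steps under the action sequence $(a^\prime,\dots,a^\prime)$, hence lies in ${\cal B}^{\times k}(\mathfrak{b})$, so the global assumption on that set applies. Everything else is linearity of expectation and a one-step Bellman expansion exactly as in the proof of Lemma~\ref{lem:qvalues}; boundedness of $Q^\ast$ makes the manipulation rigorous, and since the telescoping has only $k$ terms there is no convergence issue to address.
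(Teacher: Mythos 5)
Your argument is correct and is essentially the paper's proof: the paper simply applies Lemma~\ref{lem:qvalues} with $\pi$ the policy that plays $a$ for $k$ steps then reverts to $\pi^\ast$ (which coincides with $\pi^\ast$ by hypothesis, so $Q^\pi=Q^\ast$) and $\rho$ the policy that plays $a^\prime$ for $k$ steps then reverts to $\pi^\ast$, truncating the infinite sum because the two policies agree from step $k$ onward --- exactly your parenthetical remark, of which your hybrid-policy telescoping is just the unrolled form. The only discrepancy is an off-by-one in the summation limit ($\sum_{i=0}^{k}$ versus your $\sum_{j=0}^{k-1}$), which is an indexing ambiguity already present in the paper's statement rather than a flaw in your reasoning.
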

\begin{proof}
Let $\pi$ be a policy that executes $a$ for $k$ steps and then reverts to the optimal policy and $\rho$ be a policy that executes $b$ for $k$ steps and then reverts to the optimal policy. We apply Lemma~\ref{lem:qvalues} noting that $\pi$ coincides with the optimal policy, hence $Q^\pi = Q^\ast$.
\end{proof}
Lemma \ref{lem:qvalues_repetition} establishes that advantage of $k$-aggregate actions is the compound discounted advantage of the atomic ones. This, combined with the smoothness of the optimal $Q$-function allows us to analyze the effects of action aggregation. In particular, suppose that we have found a state $\mathfrak{b}$ such that $A(\mathfrak{b}) = Q^\ast(\mathfrak{b}, a) - \max_{a^\prime\neq a}Q^\ast(\mathfrak{b},a^\prime)\geq 2kL$. The smoothness of the $Q$-function allows us to infer that for any action taken, the advantage at the next state will be at least $2kL - 2L$, $2kL - 4L$ after $2$ steps and so on (this guarantees the advantage gap can't close in less than $k$ steps). Hence if we were to replace atomic actions with $k$-repeated actions at states with advantage of more than $2kL$, the following can be observed:  
\begin{lemma}
\label{lem:losslessamp}
Consider an MDP with an $L$-smooth optimal $Q$-function $Q^\ast$, and the reparametrization by holding actions fixed for $k$ steps whenever the advantage gap is greater than $2kL$. Then, at each state, either:
\begin{align*}
A^{\times k}(\mathfrak{b}) :=&~ Q^\ast(\mathfrak{b}, a^{\times k}) - Q^\ast(\mathfrak{b}, a^{\prime\times k})\\
\geq&~ A(\mathfrak{b})\frac{1-\gamma^{k}}{1-\gamma}\\ &- 2\gamma L\frac{1 - (1 + k - \gamma k)\gamma^{k}}{(1-\gamma)^2},
\end{align*}
or $A(\mathfrak{b}) \leq 2kL$.
\end{lemma}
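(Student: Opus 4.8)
The plan is to dispose of the disjunction trivially and then establish the displayed inequality under the standing assumption $A(\mathfrak{b}) > 2kL$. Fix such a $\mathfrak{b}$, let $a = \pi^\ast(\mathfrak{b})$ be the event-level optimal action, and let $a'$ be any competing action (in particular the $k$-repeated action realizing $A^{\times k}(\mathfrak{b})$). The first task is to check that Lemma~\ref{lem:qvalues_repetition} is actually applicable at $\mathfrak{b}$, i.e.\ that $\pi^\ast(\mathfrak{b}') = a$ for every $\mathfrak{b}' \in {\cal B}^{\times k}(\mathfrak{b})$. I would prove this by induction on the step index $i$: by $L$-smoothness the optimal Q-value of $a$ decreases by at most $L$ per transition, while that of any other action increases by at most $L$, so along any length-$i$ action sequence with $i \le k-1$ the gap between $a$ and its best competitor is still at least $A(\mathfrak{b}) - 2iL \ge A(\mathfrak{b}) - 2(k-1)L > 0$; hence $a$ stays strictly optimal throughout ${\cal B}^{\times k}(\mathfrak{b})$, as the lemma requires.

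With applicability secured, I would invoke Lemma~\ref{lem:qvalues_repetition} to write $A^{\times k}(\mathfrak{b})$ as the discounted expectation, over the trajectory that repeatedly executes $a'$ from $\mathfrak{b}$, of the per-step event-level advantages $Q^\ast(\mathfrak{b}^{a'}_i, a) - Q^\ast(\mathfrak{b}^{a'}_i, a')$ for the first $k$ steps. The next step is a pointwise lower bound on each summand via smoothness: along the $a'$-trajectory one has $Q^\ast(\mathfrak{b}^{a'}_i, a') \le Q^\ast(\mathfrak{b}, a') + iL$ and $Q^\ast(\mathfrak{b}^{a'}_i, a) \ge Q^\ast(\mathfrak{b}, a) - iL$, and since $Q^\ast(\mathfrak{b}, a) - Q^\ast(\mathfrak{b}, a') \ge A(\mathfrak{b})$ for every $a'\neq a$ this yields the deterministic bound
\[
Q^\ast(\mathfrak{b}^{a'}_i, a) - Q^\ast(\mathfrak{b}^{a'}_i, a') \;\ge\; A(\mathfrak{b}) - 2iL ,
\]
which therefore also lower-bounds the expectation term by term.

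It then remains to evaluate $\sum_{i} \gamma^i \big(A(\mathfrak{b}) - 2iL\big)$ over the relevant horizon: the $A(\mathfrak{b})$-part is the geometric sum $A(\mathfrak{b})\frac{1-\gamma^k}{1-\gamma}$, and the $-2iL$-part is an arithmetico-geometric sum which, after collecting terms, produces the negative quantity $2\gamma L\frac{1-(1+k-\gamma k)\gamma^k}{(1-\gamma)^2}$ appearing in the statement; subtracting gives the claimed inequality. I expect the arithmetic of the two series to be routine and the real obstacle to be the first step --- propagating the advantage gap forward for the full $k$ steps along \emph{every} branch of the depth-$k$ action tree, since Lemma~\ref{lem:qvalues_repetition} needs $a$ optimal on all of ${\cal B}^{\times k}(\mathfrak{b})$, not merely on the optimal or the $a'$ branch. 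A related subtlety worth making explicit is that both this propagation and the pointwise bound tacitly use that $Q^\ast(\cdot,a)$ varies by at most $L$ across a transition even when that transition is taken under an action other than $a$, so one should either read the $L$-smoothness definition in that slightly stronger sense or derive it from the definition as given.
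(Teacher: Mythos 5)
Your proof follows essentially the same route as the paper's: invoke Lemma~\ref{lem:qvalues_repetition} to write $A^{\times k}(\mathfrak{b})$ as a discounted sum of event-level advantages along the $a'$-trajectory, lower-bound each summand by $A(\mathfrak{b}) - 2iL$ via $L$-smoothness, and evaluate the resulting geometric and arithmetico-geometric series. The only difference is that you spell out the verification that $a$ stays optimal on all of ${\cal B}^{\times k}(\mathfrak{b})$ (which the paper relegates to the discussion preceding the lemma), so the argument is correct.
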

\begin{proof}
As established, $A^{\times k}(\mathfrak{b})  = E[\sum_{i=0}^k\gamma^i A(\mathfrak{b}_i^{a^\prime})]$. Smoothness of the $Q$-function guarantees that $A(\mathfrak{b}_{i+1}^{a^\prime})\geq A(\mathfrak{b}_i^{a^\prime}) - 2L$, resp. $A(\mathfrak{b}_k^{a^\prime})\geq A(\mathfrak{b}) - 2kL$ as discussed above. Hence
\begin{align}
A^{\times k}(\mathfrak{b}) &= \E \left[ \sum_{i=0}^k\gamma^i A(\mathfrak{b}_i^b) \right]\\
&\geq \E \left[ \sum_{i=0}^k\gamma^i(A(\mathfrak{b}) - 2iL) \right]\\
&= A(\mathfrak{b})\frac{1-\gamma^{k}}{1-\gamma} - 2L\frac{\gamma - (1 + k - \gamma k)\gamma^{k+1}}{(1-\gamma)^2}.
\end{align}
Here, the first expression comes from the finite sum geometric series formula, and the second term reflects the fact that \[\sum_{i=0}^k i\gamma^i = \frac{\gamma - (1+k - \gamma k)\gamma^{k+1}}{(1-\gamma)^2}.\]
\end{proof}

\ignore{
\begin{figure*}[t]
\centering
  \includegraphics[width=0.5\linewidth]{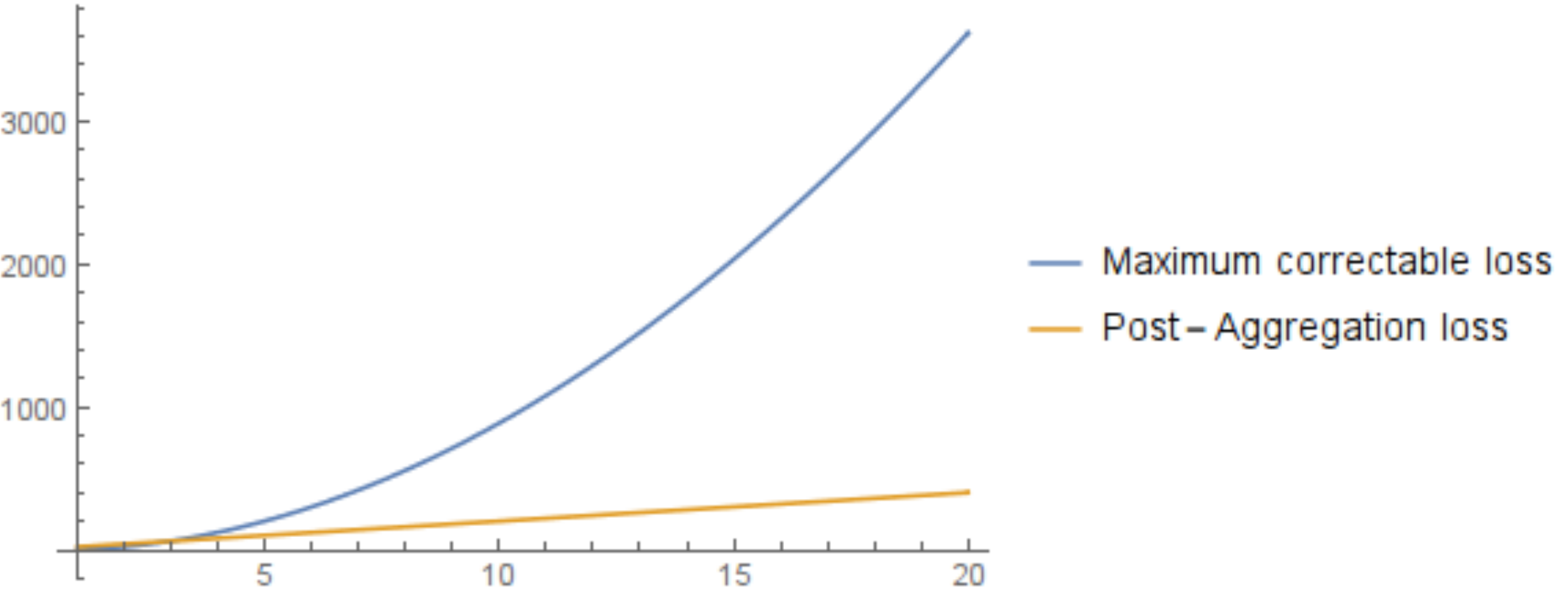}
   \vspace*{-2mm}
   \caption{\todo{better plot}}
  \vspace*{-3mm}
  \label{fig:ideal_noise_rejection}
\end{figure*}
}
We have thus far considered the situation where aggregation is state-dependent, based on the knowledge of the advantage amplitude. A more realistic implementation of aggregation would be to fix some global static $k$ apriori. While the reasoning is similar to the above, there is an added complexity that the aggregation itself comes at a price -- the values of certain states will be reduced (relative to the best atomic policy) due to the inability to rapidly switch actions. This additional cost must then be factored into the bounds.

We first bound the cost of the best slow policy under smooth $Q$ assumptions. 
\begin{customthm}{\ref{thm:approxbound}}
Let $k$ be a fixed horizon and $Q^\ast$, the event-level optimal Q-function be $L$-smooth. Then for all $\mathfrak{b}$, $|V^\ast(\mathfrak{b}) - V^{\times k\ast}(\mathfrak{b})| \leq \frac{2kL}{1-\gamma}$, where $ V^{\times k\ast}(\mathfrak{b})$ is the value of state $\mathfrak{b}$ under an optimal $k$-aggregate policy\footnote{Note that the reparametrized problem in which a decision can be made every $k$ atomic steps is also an MDP, so the notion of an optimal value function (one that provides the largest possible value at every state) resp. optimal deterministic policy is well-defined. }.
\end{customthm}
\begin{proof}
We lower bound the return of the optimal $k$-aggregate policy by exhibiting a (not-necessarily optimal) $k$-aggregate policy with sufficient returns. In particular, let $\pi$ be a deterministic optimal atomic policy, and $\rho$ be the policy that at state $\mathfrak{b}_t$ executes $\pi(\mathfrak{b}_t)$ for the next $k$ steps, then executes $\pi(\mathfrak{b}_{t+k})$, etc. Following Lemma~\ref{lem:qvalues}, we have that 
\begin{align*}
V^\pi(\mathfrak{b}) - & V^\rho(\mathfrak{b}) \\
&= \E_\rho\left[\sum_{i=0}\gamma^i \left(Q^\pi(\mathfrak{b}_i, \pi(\mathfrak{b}_i)) - Q^\pi(\mathfrak{b}_i, \rho(\mathfrak{b}_i))\right)\right].    
\end{align*}

Let $(\mathfrak{b}_{ik})_{i\in \mathbb{N}}$ be the states at which $\rho$ may switch actions. At any $\mathfrak{b}_{ik}$, 
if $A(\mathfrak{b}_{ik})\geq 2kL$, due to smoothness of $Q^*$, we know that $a$ remains optimal for the next $k$ steps thus $\rho(\mathfrak{b}_{ik + j}) = \pi(\mathfrak{b}_{ik + j})$ and $Q^*(\mathfrak{b}_{ik + j}, \pi(\mathfrak{b}_{ik+j})) - Q^*(\mathfrak{b}_{ik+j}, \rho(\mathfrak{b}_{ik+j})) = 0$ until the next decision point for $\rho$ (i.e. for $j\leq k - 1$). Conversely, if $A(\mathfrak{b}_{ik}) < 2kL$, then in the worst case, assuming that the Q-value of the optimal action $Q^*(\mathfrak{b}_{ik+j},a)$ decreases by $L$ with every step $j$ and the Q-value of a suboptimal action $Q^*(\mathfrak{b}_{ik+j}, a^\prime)$ increases respectively by $L$, then for any of the next $k$ steps, $Q^*(\mathfrak{b}_{ik+j},a) - Q^*(\mathfrak{b}_{ik+j}, a^\prime) \geq -2kL$. Hence, for all $i$, $|Q^*(\mathfrak{b}_i, \pi(a)) - Q^*(\mathfrak{b}_i, \rho(a)))|$ is either $0$ or less than $2kL$, leading to 
\[V^\pi(\mathfrak{b}) - V^\rho(\mathfrak{b}) \leq \sum_{i=0}\gamma^i 2kL = \frac{2kL}{1-\gamma}.\]
\end{proof}

We now factor in the aggregation loss from Thm.~\ref{thm:approxbound} into Lemma~\ref{lem:losslessamp}, which together characterize the amplification properties of action aggregation.
\begin{customthm}{\ref{thm:amplification_agg}}
In an $L$ smooth MDP, let $k$ be a fixed repetition horizon. For any state where the advantage gap $A(\mathfrak{b})$ greater than $2kL$, the fixed-horizon advantage is lower-bounded as follows:
\begin{align*}A(\mathfrak{b})^{\times k}\\
\geq&~ A(\mathfrak{b})\frac{1-\gamma^{k}}{1-\gamma} - 2L\frac{\gamma - (1+k - \gamma k)\gamma^{k+1}}{(1-\gamma)^2}\\
&- \frac{2kL}{1-\gamma}.\end{align*}
\end{customthm}
\begin{proof}
Let $\bar{Q}^{\times k\ast}$ be the losslessly amplified $Q$-function as in Lemma~\ref{lem:losslessamp}. Since $A(\mathfrak{b}) \geq 2kL$, action $a$ is optimal under an atomic optimal policy for the next $k$ periods and due to Thm.~\ref{thm:approxbound},
\[Q^{\times k\ast}(\mathfrak{b}, a^{\times k}) \geq  Q^\ast(\mathfrak{b}, a) - \frac{2kL}{1-\gamma} = \bar{Q}^{\times k\ast}(\mathfrak{b}, a^{\times k})- \frac{2kL}{1-\gamma}.\]
Moreover, for any $a^\prime \neq a$
\[Q^{\times k\ast}(\mathfrak{b}, a^{\prime\times k}) \leq \bar{Q}^{\times k\ast}(\mathfrak{b}, a^{\prime\times k}),\]
as the expected return following the aggregation period of $k$ can only be lower than the one of the losslessly amplified problem. Thus:
\begin{align*}
Q^{\times k\ast}(\mathfrak{b}, a^{\times k}) & - Q^{\times k\ast}(\mathfrak{b}, a^{\prime\times k})\\
\geq&~ \E\left[ \sum_i^k \gamma^i \left(Q^\ast(\mathfrak{b}_i, a) - Q^{\ast}(\mathfrak{b}_i, a^\prime)\right) \right]  - \frac{2kL}{1-\gamma}.
\end{align*}
The bound from the expectation term comes from Lemma~\ref{lem:losslessamp}.
\end{proof}

\subsection{Analysis of Switching Cost}

Let $Q^*$ be an L-smooth optimal Q-function of a POMDP with an optimal deterministic policy $\pi^*$ and $T$ be a switching cost. Consider the following policy $\rho$: at $\mathfrak{b}_0$, $\rho$ adopts the optimal atomic action $a_0 = \rho_0(\mathfrak{b}_0)=\pi^*(\mathfrak{b}_0)$. Also at $t=0$, $\rho$ calculates the time until its regret for repeating $a_0$ relative to following the optimal policy exceeds, $T$, i.e.

\begin{align*}
\omega_0 & = \\
&\argmin_t \left\{t : \sum_{i=0}^t \gamma^i\mathbb{E}\left[Q^*(\mathfrak{b}_i, \pi^*(\mathfrak{b}_i))\right.\right.\\
 &\quad\quad\quad\quad\quad\quad\quad\quad\quad\quad\quad\quad\left.-Q^*(\mathfrak{b}_i, a_0)| \mathfrak{b}_0\right] \geq T \Bigg\} ,
\end{align*}%
(where the expectation is, as before, over trajectories of belief sates generated by executing $a_0$, conditioned on the realization of $\mathfrak{b}_0$)
and then repeats $a_0$ until $t = \omega_0$. At $t=\omega_0$, $\rho$ queries the optimal policy for $a_{\omega_0} = \pi^*(\mathfrak{b}_{\omega_0})$, executes it until time $\omega_1$ and so on. In other words, $\rho$ repeats an action adopted from the optimal policy until its regret for not having followed the optimal policy exceeds $T$ and then switches. 

The return of $\rho$ is equivalent to the return of a policy that would pay $T$ upfront to switch to and follow the optimal policy for $\omega$ steps. We first bound the loss of this policy and then argue that it also upper-bounds the loss of the optimal switching cost policy. 

\begin{lemma}
\label{lem:regretbound}
The total regret of $\rho$ relative to $\pi^*$ is no more than \[\frac{2L}{1-\gamma}\frac{\log\gamma + (\gamma - 1)W\left(\frac{\gamma^{1/(1-\gamma)}}{\gamma-1}\left(\frac{(1-\gamma)^2}{2\gamma L}T - 1\right)\log\gamma\right)}{(\gamma - 1)\log\gamma}.\]
\end{lemma}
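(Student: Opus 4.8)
The plan is to track a single "epoch" of $\rho$ --- the period between two consecutive switch points $\omega_j$ and $\omega_{j+1}$ --- and bound the regret accumulated there, then observe that the worst-case total regret is attained when a single epoch is as long as possible. First I would unpack the definition of $\omega_0$: it is the first time $t$ at which the cumulative discounted advantage of $a_0$ relative to $\pi^*$, namely $\sum_{i=0}^{t}\gamma^i\,\E[Q^*(\mathfrak{b}_i,\pi^*(\mathfrak{b}_i)) - Q^*(\mathfrak{b}_i,a_0)\mid \mathfrak{b}_0]$, reaches $T$. By $L$-smoothness, the per-step advantage $\E[Q^*(\mathfrak{b}_i,\pi^*(\mathfrak{b}_i)) - Q^*(\mathfrak{b}_i,a_0)]$ grows by at most $2L$ per step (Lemma~\ref{lem:qvalues_repetition}'s smoothness argument: the optimal action's $Q$ can drop by $L$, the held action's can rise by $L$), so it is at most $2Li$ at step $i$. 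Hence the switch does not trigger until $\sum_{i=0}^{t}\gamma^i\cdot 2Li \geq T$, which, using the closed form $\sum_{i=0}^{t} i\gamma^i = \frac{\gamma - (1+t-\gamma t)\gamma^{t+1}}{(1-\gamma)^2}$ from Lemma~\ref{lem:losslessamp}, gives a lower bound on the length $\omega$ of the longest possible epoch. The key trick is that once we know the epoch cannot end before this $\omega$, the \emph{regret} in that epoch is itself bounded by $T$ (by the very stopping rule, plus the switching cost $T$ the fictitious policy "pays"), but what we actually want is the regret relative to $\pi^*$ over the whole horizon; summing geometrically over epochs and using that each epoch has length $\geq\omega$, the total regret telescopes to at most $\frac{1}{1-\gamma^{\omega}}$ times the per-epoch regret --- and to make this clean one converts to the bound $\frac{2L}{1-\gamma}\cdot\kappa$ with $\kappa$ playing the role of the "effective aggregation horizon" $\omega$.

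Concretely, the second step is to solve the transcendental inequality $\sum_{i=0}^{\omega}2Li\gamma^i = T$ for $\omega$. Substituting $x = \omega$ and rearranging $\frac{2\gamma L(1 - (1+\omega(1-\gamma))\gamma^{\omega})}{(1-\gamma)^2} = T$ yields an equation of the form $(1+\omega(1-\gamma))\gamma^{\omega} = 1 - \frac{(1-\gamma)^2 T}{2\gamma L}$; setting $u = -(1-\gamma)\omega\log\gamma$ this becomes $(1 - \tfrac{u}{\log\gamma})e^{-u}\cdot(\text{const}) = \cdots$, i.e.\ an equation $y e^{y} = z$ whose solution is $y = W(z)$. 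Carefully isolating the Lambert-$W$ argument $z = \frac{\gamma^{1/(1-\gamma)}}{\gamma-1}\!\left(\frac{(1-\gamma)^2}{2\gamma L}T - 1\right)\log\gamma$ and back-substituting gives exactly $\omega = \kappa = \frac{\log\gamma + (\gamma-1)W(z)}{(\gamma-1)\log\gamma}$. Then the third step is the regret aggregation: $\rho$'s regret relative to $\pi^*$ is, by Lemma~\ref{lem:qvalues}, $\E_\rho[\sum_i \gamma^i(Q^*(\mathfrak{b}_i,\pi^*(\mathfrak{b}_i)) - Q^*(\mathfrak{b}_i,\rho(\mathfrak{b}_i)))]$; within each epoch this sum is bounded by the stopping threshold ($\leq T$, or equivalently by the smoothness-derived $\sum\gamma^i 2Li$ over the epoch), and since epochs are at least $\kappa$ long the discount factor $\gamma^{\kappa}$ between epoch starts lets us sum the geometric series of epoch contributions to get the stated $\frac{2L}{1-\gamma}\kappa$.

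I expect the main obstacle to be the algebra of the Lambert-$W$ inversion in step two --- in particular getting the branch and the sign of the argument right, and verifying that the $\omega$ we extract really is a \emph{lower} bound on every epoch length (so that it legitimately upper-bounds the regret), not merely the length of one particular epoch. A secondary subtlety is justifying the reduction "regret of $\rho$ $\leq$ regret of the policy that pays $T$ upfront to follow $\pi^*$ for $\omega$ steps, and this in turn $\leq$ the analogous bound with the best \emph{fixed} action" --- this is the sentence in the main text claiming the bound is unchanged when switching to the best fixed action for $\omega$ steps, and it relies on the $2$-action structure (any deviation from $\pi^*$ for a bounded window, in a $2$-action MDP, is dominated by committing to the single alternative action). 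I would handle this by a direct comparison of the two fictitious policies' returns using Lemma~\ref{lem:qvalues} twice, noting that the per-step advantages involved are the same ones already bounded by $2Li$ via smoothness, so no new estimate is needed. The remaining manipulations (geometric-series identities, the $\sum i\gamma^i$ formula) are routine and already invoked elsewhere in the appendix.
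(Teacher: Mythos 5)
Your proposal is essentially correct, and it reaches the stated bound by a genuinely different bookkeeping than the paper's proof, although the analytic core is the same in both: the instantaneous gap $\E\left[Q^\ast(\mathfrak{b}_i,\pi^\ast(\mathfrak{b}_i)) - Q^\ast(\mathfrak{b}_i,a_0)\,\middle|\,\mathfrak{b}_{\omega_{-1}}\right]$ starts at $0$ at each switch point and grows by at most $2L$ per step, so the worst-case cumulative regret after $k$ steps is $2L\sum_{i=0}^{k} i\gamma^i$, and inverting $2L\sum_{i=0}^{k} i\gamma^i = T$ via Lambert $W$ gives $\kappa$ exactly as you describe. The paper does not sum over epochs: it conditions on the most recent switch point only to show that the \emph{per-step} expected regret can never exceed $2L\lceil\kappa\rceil$ at any time (if it did, smoothness would force the cumulative regret to have already crossed $T$ and triggered a switch), and then bounds the total by $\sum_i \gamma^i\, 2L\kappa = \frac{2L\kappa}{1-\gamma}$. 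You instead bound each epoch's discounted contribution by the stopping threshold $T$ and sum a geometric series over epochs using the fact that every epoch lasts at least $\kappa$ steps, obtaining roughly $T/(1-\gamma^{\kappa})$. Your route yields a slightly tighter constant but leaves one step unproven that you must supply for the lemma as stated: the conversion $T/(1-\gamma^{\kappa}) \le \frac{2L\kappa}{1-\gamma}$, which does hold via $\sum_{i=1}^{\kappa} i\gamma^{i} \le \kappa\sum_{i=0}^{\kappa-1}\gamma^{i}$ (pair $i\gamma^{i}$ with $\kappa\gamma^{i-1}$ and use $i\gamma\le\kappa$). Two smaller points: the phrase ``lower bound on the length of the longest possible epoch'' should read ``lower bound on the length of \emph{every} epoch,'' which is what your geometric summation actually needs and which does follow from the stopping rule plus smoothness; and the ``best fixed action'' reduction you flag as a subtlety is not required for this lemma---$\rho$ re-adopts $\pi^\ast$'s action at every switch point by construction, so Lemma~\ref{lem:qvalues} applies directly, and that reduction only enters in the subsequent argument for Theorem~\ref{thm:regret_switch}.
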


\begin{proof}
Starting from Lemma~\ref{lem:qvalues}, the total loss of $\rho$ relative to $\pi^*$ is 
\[\sum_{i=0} \gamma^i\mathbb{E}_{\mathfrak{b}_i \sim \rho}\left[Q^*(\mathfrak{b}_i, \pi^*(\mathfrak{b}_i))-Q^*(\mathfrak{b}_i, \rho_i)\right].\] We now argue that $\mathbb{E}_{\mathfrak{b}_i \sim \rho}\left[Q^*(\mathfrak{b}_i, \pi^*(\mathfrak{b}_i))-Q^*(\mathfrak{b}_i, \rho_i)\right]$ is bounded at any step $i$. Observe that the action taken by the policy at time $i$ depends only on the belief state realization  $\mathfrak{b}_{\omega_{-1}}$, $\omega_{-1}$ being the most recent time point at which the policy was allowed to switch actions. Thus, by the law of total probability,
\begin{align*}
    \mathbb{E}&_{\mathfrak{b}_i \sim \rho}\left[Q^*(\mathfrak{b}_i, \pi^*(\mathfrak{b}_i))-Q^*(\mathfrak{b}_i, \rho_i)\right]\\
    &= \mathbb{E}_{\mathfrak{b}_{\omega_{-1}}\sim\rho}\mathbb{E}_{\mathfrak{b}_i \sim \rho}\left[Q^*(\mathfrak{b}_i, \pi^*(\mathfrak{b}_i))-Q^*(\mathfrak{b}_i, \rho_i)|\mathfrak{b}_{\omega_{-1}}\right].
\end{align*}
In the above, we have just rewritten the loss relative to the last switching point $\omega_{-1}$. We now show that the conditional expectation $\mathbb{E}_{\mathfrak{b}_i \sim \rho}\left[Q^*(\mathfrak{b}_i, \pi^*(\mathfrak{b}_i))-Q^*(\mathfrak{b}_i, \rho_i)|\mathfrak{b}_{\omega_{-1}}\right]$ is bounded for any realization $\mathfrak{b}_{\omega_{-1}}$. 
Recall that at time $\omega_{-1}$, $\rho$ picks the action $a_{\omega{-1}} = \pi^*(\omega_{-1})$ and executes it until time $\omega$, the time when its expected regret exceeds $T$, and then switches. Thus, the highest achievable per-event expected regret conditioned on $\mathfrak{b}_{\omega_{-1}}$ occurs when at time $\omega_{-1}$, there exists an alternative action $a^\prime \neq a_{\omega{-1}}$ having the same Q-value, i.e. $Q^*(\mathfrak{b}_{\omega_{-1}}, a_{\omega_{-1}}) = Q^*(\mathfrak{b}_{\omega_{-1}}, a^\prime)$,
 and then, in subsequent time steps, the expected $Q$-value of $a^\prime$, $\mathbb{E}_{\mathfrak{b}_i \sim \rho}\left[Q^*(\mathfrak{b}_i, a^\prime)|\mathfrak{b}_{\omega_{-1}}\right]$, starts increasing at the maximum  rate of $L$ while the Q-value of $a_{\omega_{-1}}$ starts decreasing at the same rate. The maximum rate of increase of the expectation is justified since if the Q-value of $a^\prime$ is $L$-smooth over individual sample paths, i.e. $|Q^*(\mathfrak{b}_i, a) - Q^*(\mathfrak{b}_{i+1}, a)| \leq L$, the sequence of expectations is also $L$-smooth, $|\mathbb{E}_{\mathfrak{b}_i \sim \rho}
\left[Q^*(\mathfrak{b}_i, a)|\mathfrak{b}_{\omega_{-1}}\right] - \mathbb{E}_{\mathfrak{b}_{i+1}|\mathfrak{b}_{\omega_{-1}} \sim \rho}\left[Q^*(\mathfrak{b}_{i+1}, a)|\mathfrak{b}_{\omega_{-1}}\right]| \leq L$). By the same reasoning as in Lemma~\ref{lem:losslessamp} (but this time for the sequence of expectations, rather than the sequence of realizations), the regret for not following $\pi^*$ after $k$ steps is bounded by $2\gamma L\frac{1 + k\gamma^k - (1+k)\gamma^k}{(1-\gamma)^2}$. Let us calculate the minimum $k$ (under assumptions of monotone regret increase at the maximum rate)
for which this regret exceeds $T$ as $\lceil \kappa \rceil$, where $\kappa$ is the solution of  $2\gamma L\frac{1 + k\gamma^k - (1+k)\gamma^k}{(1-\gamma)^2} = T$.
\begin{align*}
    2\gamma & L\frac{1 + k\gamma^{k + 1} - (1+k)\gamma^k}{(1-\gamma)^2} = T\\
    \rightarrow & k\gamma^{k + 1} - (1+k)\gamma^k = \frac{(1-\gamma)^2}{2\gamma L}T - 1\\
    \rightarrow & (\gamma - 1) k\gamma^{k} - \gamma^k = \frac{(1-\gamma)^2}{2\gamma L}T - 1\\
    \rightarrow & k = \frac{\log\gamma + (\gamma - 1)W\left(\frac{\gamma^{1/(1-\gamma)}}{\gamma-1}\left(\frac{(1-\gamma)^2}{2\gamma L}T - 1\right)\log\gamma\right)}{(\gamma - 1)\log\gamma},
\end{align*}
where $W$ is the Lambert W-function.
Knowing $\kappa$, we can deduce that the maximum per-event expected regret, $\mathbb{E}_{s_i \sim \rho}\left[Q^*(s_i, \pi^*(s_i))-Q^*(s_i, \rho_i)|\mathfrak{b}_{\omega_{-1}}\right]$ is no more than
$$\epsilon = 2L \left\lceil \frac{\log\gamma + (\gamma - 1)W\left(\frac{\gamma^{1/(1-\gamma)}}{\gamma-1}\left(\frac{(1-\gamma)^2}{2\gamma L}T - 1\right)\log\gamma\right)}{(\gamma - 1)\log\gamma} \right\rceil,$$ under worst-case assumptions. This
leads to an overall regret bound of $\epsilon/(1-\gamma)$.
\end{proof}

So far, we have produced an agent that pays a cost $T$ to switch to the optimal policy for some time-specific horizon $\omega_t$ and bounded its regret. This is not quite what we need, since $\rho$ gets more than one action switch for free within the horizon over which it is allowed to follow the optimal policy. We now argue that for a slow environment, this bound also holds for an agent which pays on every switch. To do so, we need to produce a reasonable policy that pays for every action switch.

Let $A(\mathfrak{b},\pi^*, a^{\times k})$ be the advantage of following the optimal policy for $k$ turns over repeating $a$ for the same time and then switching to the optimal policy (resp. the regret for repeating $a$ over following the optimal policy). That is, following Lemma~\ref{lem:qvalues},   
\begin{align*}
    A&(\mathfrak{b},\pi^*, a^{\times k}) =\\
    &\mathbb{E}_{\mathfrak{b}_0,\ldots,\mathfrak{b}_k}\left[\sum_{i=0}^k\gamma^{i}(Q^\ast(\mathfrak{b}_i, \pi^\ast(\mathfrak{b}_i))-Q^\ast(\mathfrak{b}_i, a))\right],
\end{align*} where the expectation is over trajectories generated by taking action $a$ and $\mathfrak{b}_0 = \mathfrak{b}$. 

Furthermore, let  $\omega(\mathfrak{b}) = \argmin_k\min_{a^\prime\in A} k$ s.t. $A(\mathfrak{b}, \pi^\ast, a_0^{\times k}) - A(\mathfrak{b}, \pi^\ast, a^{\prime\times k}) \geq T$ if the former is feasible, else $\infty$. In words, $\omega(\mathfrak{b})$ is the shortest time until the regret of having repeated the current action $a_0$ exceeds the regret of having repeated some other action $a^\prime$ by $T$. Note that this does not need to be finite, since in some state, repeating any action for any amount of time might yield similar returns.  

Based on this, we construct a policy $\sigma$ that behaves as if it would pay $T$ on every switch. 
At $\mathfrak{b}_0$, $\sigma$ executes the atomic optimal action $\pi^*(\mathfrak{b}_0)$. At the next state, $\mathfrak{b}_1$, if $\omega(\mathfrak{b}_1)$ is infinite, $\sigma$ repeats $a_0$ once. Otherwise, $\sigma$ repeats $a_0$ $\omega(\mathfrak{b}_1)$ times and then switches to $a^\prime$, the action whose regret exceeds $T$ after $\omega(\mathfrak{b}_1)$ steps. This policy can do no better than a policy that pays $T$ upfront to switch to $a^\prime$ and maintain it for  $\omega(\mathfrak{b}_1)$ steps. By extension, $\sigma$ can generate no more return than the best switching cost policy. Now we can bound its loss.

\begin{customthm}{\ref{thm:regret_switch}}
The regret of the optimal switching cost policy for a $2$-action MDP is less than $\frac{2\kappa L}{1-\gamma}$, where $\kappa$ is the same as in the previous theorem.
\end{customthm}
\begin{proof}
Again we must scrutinize the sequence $\left(\mathbb{E}_{\mathfrak{b}_i\sim \sigma}\left[Q^*(\mathfrak{b}_i, \pi^*(\mathfrak{b}_i))-Q^*(\mathfrak{b}_i, \sigma_i)\right]\right)_{i\in\mathbb{N}}$ showing that it is bounded at any step $i$. We consider the contrapositive: suppose there existed $i$, such that
\[\mathbb{E}_{\mathfrak{b}_i\sim \sigma}\left[Q^*(\mathfrak{b}_i, \pi^*(\mathfrak{b}_i))-Q^*(\mathfrak{b}_i, \sigma_i)\right] > 2\kappa L\]
with $\kappa$ as in the previous theorem.
By the law of total probability,
\begin{align*}
    \mathbb{E}&_{\mathfrak{b}_i\sim \sigma}\left[Q^*(\mathfrak{b}_i, \pi^*(\mathfrak{b}_i))-Q^*(\mathfrak{b}_i, \sigma_i)\right]\\
    & = \mathbb{E}_{\mathfrak{b}_{i-\kappa}}\mathbb{E}_{\mathfrak{b}_i\sim \sigma} \left[Q^*(\mathfrak{b}_i, \pi^*(\mathfrak{b}_i))-Q^*(\mathfrak{b}_i, \sigma_i)|\mathfrak{b}_{i-\kappa}\right] > 2\kappa L.
\end{align*}
The expectation over $\mathfrak{b}_{i-\kappa}$ in the second line above implies that for at least one possible realization of $\mathfrak{b}_{i-\kappa}$, the expected Q difference after $\kappa$ periods exceeds $2\kappa L$, i.e. $\mathbb{E}_{\mathfrak{b}_i\sim \sigma} \left[Q^*(\mathfrak{b}_i, \pi^*(\mathfrak{b}_i))-Q^*(\mathfrak{b}_i, \sigma_i)| \mathfrak{b}_{i-\kappa}\right]\geq 2\kappa L$. Let $\hat{\mathfrak{b}}$ be such a realization of $\mathfrak{b}_{i-\kappa}$ and let $a^* = \pi^*(\mathfrak{b}_i)$ resp. $a = \sigma(\mathfrak{b}_i)$ . Due to the $L$-smoothness of the sequences of expected Q-values $\left(\mathbb{E}_{\mathfrak{b}_j \sim \rho}\left[Q^*(\mathfrak{b}_j, a^*)|\mathfrak{b}_{i-\kappa}=\hat{\mathfrak{b}}\right]\right)_{j\in [i-\kappa, \ldots, i]}$, $\left(\mathbb{E}_{\mathfrak{b}_j \sim \rho}\left[Q^*(\mathfrak{b}_j, a)|\mathfrak{b}_{i-\kappa}=\hat{\mathfrak{b}}\right]\right)_{j\in [i-\kappa, \ldots, i]}$, we know that for all ${j\in [i-\kappa, \ldots, i]}$, $\mathbb{E}_{\mathfrak{b}_j \sim \rho}\left[Q^*(\mathfrak{b}_j, a^*)|\mathfrak{b}_{i-\kappa}=\hat{\mathfrak{b}}\right] \geq \mathbb{E}_{\mathfrak{b}_j \sim \rho}\left[Q^*(\mathfrak{b}_j, a)|\mathfrak{b}_{i-\kappa}=\hat{\mathfrak{b}}\right]$.    It is straightforward to verify that for any $L$-smooth sequences of real numbers $(A_i)_{i \in [1,\ldots, k]} \geq (B_i)_{i \in [1,\ldots, k]}$ of length $\kappa$, such that $A_\kappa - B_\kappa > 2\kappa L$, the discounted sum $\sum_{i=1}^\kappa \gamma^{i-1}(A_i - B_i)$ must be greater than $T$. Hence, the cumulative expected regret of taking $a^*$ vs. $a$ following state $\hat{\mathfrak{b}}$ must exceed $T$ and $\sigma$ must have switched by time $i$, which is a contradiction with the assumption that $\sigma$ executes $a$.   

\end{proof}
\ignore{
{\bf Amplification guarantee.}
We have thus far bounded the loss of the switching cost regularized policy relative to the atomic optimal policy for the problem. We now demonstrate the benefit in terms of a theorem mirroring Thm.~\ref{thm:amplification_agg}.
\begin{customthm}{\ref{thm:amplification_switch}}
Let $\kappa$ be as in Lemma~\ref{lem:regretbound}. Then any state whose advantage gap w.r.t. the atomic optimal policy is at least $(1 + \frac{1}{1-\gamma})2\kappa L$ will have an advantage of at least $2T$ in the regularized optimal policy.
\end{customthm}
\begin{proof}
Let $s$ be such that $Q^\ast(s,a) - Q^\ast(s,b) > (1 + \frac{1}{1-\gamma})2\kappa L$ with $a$ being the optimal action and $b$ bein any other action. We aim to show that in the regularized problem $Q^\ast(s,a,a) - Q^\ast(s,b,b) > T$, which automatically implies $Q^\ast(s,a,a) - Q^\ast(s,a,b) > 2T$ since $Q^\ast(s,b,b) = Q^\ast(s,b,a) - T$.

Before we proceed, we make the following observations:
\begin{itemize}
    \item[(i)] in any state $s$, for the atomic optimal action a, $Q(s,a,a) \geq Q(s, a) - \frac{1}{1-\gamma}2\kappa L$;
    \item[(ii)] for any atomic suboptimal action b, $Q(s,b) > Q(s, b, b)$;
    \item[(iii)] if the optimal regularized policy at state $(s,b)$ takes action $a$, then $Q^\ast(s, a, a) - Q^\ast(s, b, b) > T$.
\end{itemize}
Now, suppose the agent finds itself in augmented state $(s,b)$ and follows the optimal regularized policy forever after $\pi^T$. Let $\bar{s} = (s, s',...)$ be a trajectory generated by $\pi^T$ following $(s,b)$ and let $k(\bar{s})$ be the number of times $\pi^T$ repeats $b$ on that trajectory before switching.   
 Then,
\begin{align*}
  Q(s,a,a) - Q(s,b,b) \geq \mathbb{E}_{\bar{s}}\left[Q(s,a^{\times k(\bar{s})}) - Q(s,b^{\times k(\bar{s})}) -\gamma^{k(\bar{s})}T\right] - \frac{1}{1-\gamma}2\kappa L  
\end{align*}
due to (i) and (iii).
We now bound the expectation over trajectories by the minimum over trajectories
\begin{align*}
    \mathbb{E}&_{\bar{s}}\left[Q(s,a^{\times k(\bar{s})}) - Q(s,b^{\times k(\bar{s})}) -\gamma^{k(\bar{s})}T\right] - \frac{1}{1-\gamma}2\kappa L\\
    &\geq \inf_{\bar{s}} \sum_i^{k(\bar{s})}\gamma^i (Q(\bar{s}_i,a) - Q(\bar{s}_i,b))  -\gamma^{k(\bar{s})}T - \frac{1}{1-\gamma}2\kappa L
\end{align*}
Now, we consider the hypothetical realization of Q-values that minimizes the above expression under $L$-smoothness assumptions. We've assumed that $Q(s,a) - Q(s,b)\geq (1 + \frac{1}{1-\gamma})2\kappa L$. To minimize the cumulative difference in Q-values, $Q(s_i,a)$ would decrease at a rate of $L$, while $Q(s_i,b)$ increases at a rate of $L$ so that
both values become equal after at least $\kappa$ steps. We can now claim that the above is lower-bounded by $T$. By design, $\sum_{i=0}^{\kappa}\gamma^i (Q(\bar{s}_i,a) - Q(\bar{s}_i,b)) - \frac{1}{1-\gamma}2\kappa L = T$, thus $\sum_{i=k(\bar{s})}^{\kappa}\gamma^i (Q(\bar{s}_i,a) - Q(\bar{s}_i,b)) - \frac{1}{1-\gamma}2\kappa L \leq \gamma^i T$. This means that waiting until $\kappa$ incurs less regret than switching, thus in the worst case, $k(\bar{s})\geq \kappa$. However, as noted, $\sum_{i=0}^{\kappa}\gamma^i (Q(\bar{s}_i,a) - Q(\bar{s}_i,b)) - \frac{1}{1-\gamma}2\kappa L = T$ by construction, hence $Q(s,a,a) - Q(s,b,b)\geq T$.  
\end{proof}}

\end{document}